\theoremstyle{plain}
\newtheorem{theorem}{Theorem}[section]
\newtheorem{proposition}[theorem]{Proposition}
\theoremstyle{definition}
\theoremstyle{remark}
\newcommand{\greent}[1]{\textcolor{black}{#1}}
\title{A Markov Decision Process for Variable Selection in Branch \& Bound}
\author{%
  Paul Strang\\
  EDF R\&D, France \\
  CNAM Paris, France \\
  \texttt{paul.strang@edf.fr} \\
  \And 
  Zacharie Alès \\
  ENSTA IP Paris, France \\
  CNAM Paris, France \\
  \texttt{zacharie.ales@ensta.fr} \\
  \And 
  Côme Bissuel
  \\
  EDF R\&D, France\\
  \texttt{come.bissuel@edf.fr} \\
  \And
  Olivier Juan
  \\
  EDF R\&D, France\\
  \texttt{olivier.juan@edf.fr} \\
  \And 
  Safia Kedad-Sidhoum
  \\
  CNAM Paris, France\\
  \texttt{safia.kedad\_sidhoum@cnam.fr} \\
  \And
  Emmanuel Rachelson
  \\
  ISAE-SUPAERO, France\\
  \texttt{emmanuel.rachelson@isae-supaero.fr} \\
}
\begin{document}

\maketitle

\begin{abstract}
    Mixed-Integer Linear Programming (MILP) is a powerful framework used to address a wide range of NP-hard combinatorial optimization problems, often solved by Branch and bound (B\&B). A key factor influencing the performance of B\&B solvers is the variable selection heuristic governing branching decisions. Recent contributions have sought to adapt reinforcement learning (RL) algorithms to the B\&B setting to learn optimal branching policies, through Markov Decision Processes (MDP) inspired formulations, and ad hoc convergence theorems and algorithms. In this work, we introduce BBMDP, a principled vanilla MDP formulation for variable selection in B\&B, allowing to leverage a broad range of RL algorithms for the purpose of learning optimal B\&B heuristics. Computational experiments validate our model empirically, as our branching agent outperforms prior state-of-the-art RL agents on four standard MILP benchmarks.
\end{abstract}

\section{Introduction}
\label{sec:intro}
    Mixed-Integer Linear Programming (MILP) is a subfield of combinatorial optimization (CO), a discipline that aims at finding solutions to optimization problems with large but finite sets of feasible solutions. Specifically, mixed-integer linear programming addresses CO problems that are NP-hard, meaning that no polynomial-time resolution algorithm has yet been discovered to solve them. Mixed-integer linear programs are used to solve efficiently a vast range of high-dimensional combinatorial problems, spanning from operations research \citep{hillier2015introduction} to the fields of deep learning \citep{tjengevaluating}, finance \citep{mansini2015linear}, computational biology \citep{gusfield2019integer}, and fundamental physics \citep{barahona1982computational}. 
    MILPs are traditionally solved using Branch and bound (B\&B) \citep{land1960automatic}, an algorithm which methodically explores the space of solutions by dividing the original problem into smaller sub-problems, while ensuring the optimality of the final returned solution. 
    Intensively developed since the 1980s \citep{bixby_brief_2012}, MILP solvers based on the B\&B algorithm are high-performing tools. 
    In particular, they rely on complex heuristics fine-tuned by experts on large heterogeneous benchmarks \citep{miplib_2021}. 
    Hence, in the context of real-world applications, in which similar instances with slightly varying inputs are solved on a regular basis, there is a huge incentive to reduce B\&B total solving time by learning efficient tailor-made heuristics. The branching heuristic, or variable selection heuristic, which determines how to iteratively partition the space of solutions, has been found to be critical to B\&B computational performance \citep{achterberg2013mixed}. Over the last decade, many contributions have sought to harness the predictive power of machine learning (ML) to learn better-performing B\&B heuristics \citep{bengio_machine_2021, scavuzzo_machine_2024}.
    By using imitation learning (IL) to replicate the behaviour of a greedy branching expert at lower computational cost, \citet{gasse_exact_2019} established a landmark result as they first managed to outperform a solver relying on human-expert heuristics. 
    Building on the works of \citet{gasse_exact_2019} and \citet{he_learning_2014}, who proposed a Markov decision process (MDP) formulation for node selection in B\&B, several contributions succeeded in learning efficient branching strategies by reinforcement \citep{etheve_reinforcement_2020, scavuzzo_learning_2022, parsonson_reinforcement_2022}, without surpassing the performance achieved by the IL approach. 
    Yet, if the performance of IL heuristics are caped by that of the suboptimal branching experts they learn from, the performance of RL branching strategies are, in theory, only bounded by the maximum score achievable. 
    We note that in order to cope with dire credit assignment problems \citep{pignatelli2023survey} induced by the sparse reward model described in \citet{he_learning_2014}, prior research efforts have shifted away from the traditional Markov decision process framework, finding it impractical for learning efficient branching strategies. Instead, \citet{etheve_reinforcement_2020}, \citet{scavuzzo_learning_2022} and \citet{parsonson_reinforcement_2022} have adopted unconventional MDP-inspired formulations to model variable selection in B\&B. \\


    \setlength{\intextsep}{0pt}
    \setlength{\columnsep}{10pt}
    \begin{wrapfigure}{r}{0.40\textwidth}
        \includegraphics[trim={0pt 0pt 0pt 0pt}, clip, width=0.38\textwidth]{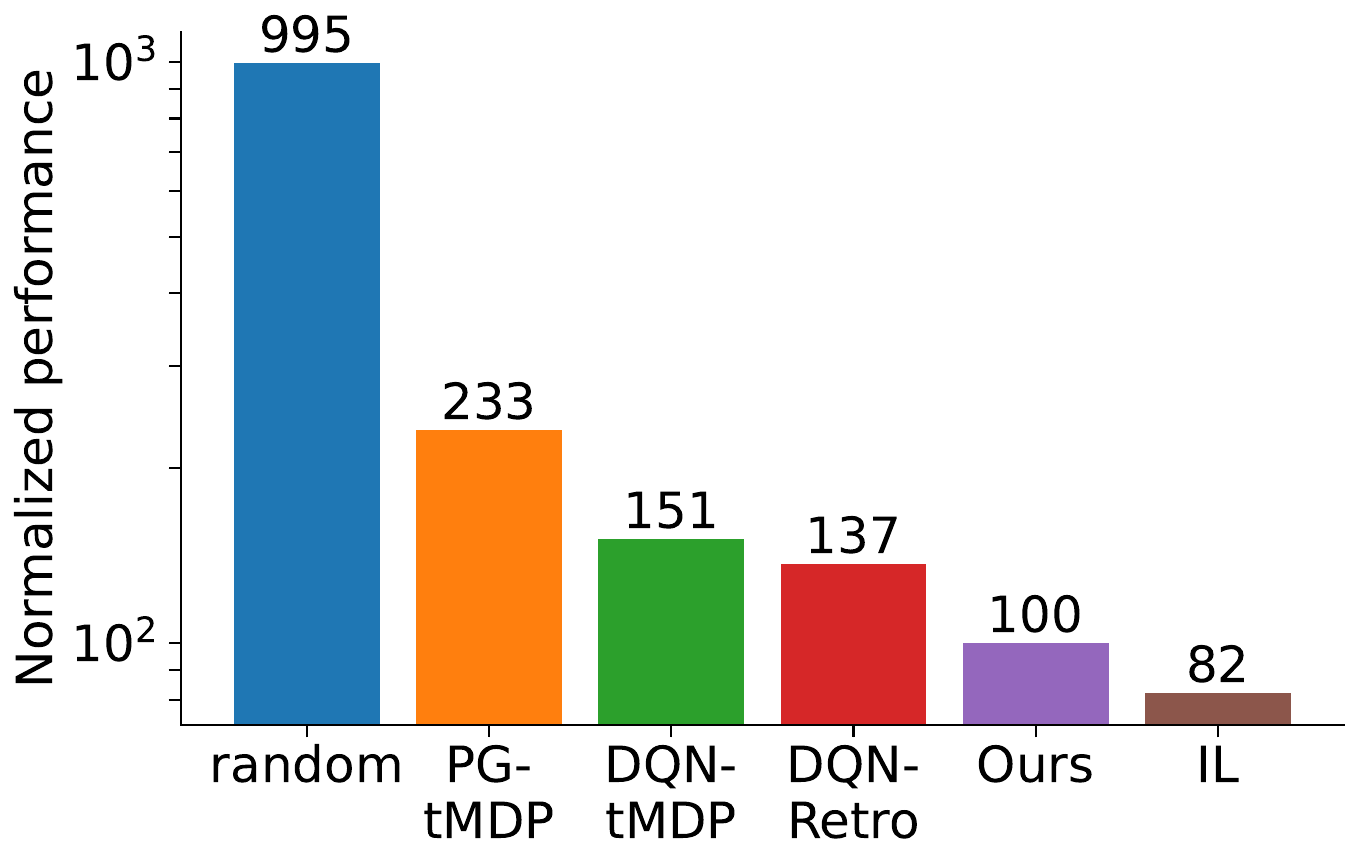}
      \caption{Normalized scores in log scale of IL, RL and random agents across the Ecole benchmark \citet{prouvost_ecole_2020}.}
      \label{fig:teasing}
    \end{wrapfigure}
    
    In this work, we show that, despite improving the empirical performance of RL algorithms, these alternative formulations introduce approximations which undermine the asymptotic performance of RL branching agents in the general case. 
    In order to address this issue, we introduce branch and bound Markov decision processes (BBMDP), a principled vanilla MDP formulation for variable selection in B\&B, which preserves convergence properties brought by previous contributions without sacrificing optimality. 
    Our new formulation allows to define a proper Bellman optimality operator, which, in turn, enables to unlock the full potential of state-of-the-art approximate dynamic programming algorithms \citep{hessel_rainbow_2017, dabney2018implicit, farebrother_stop_2024} for the purpose of learning optimal B\&B branching strategies. 
    We evaluate our method on four classic MILP benchmarks, achieving state-of-the art performance and dominating previous RL agents while narrowing the gap with the IL approach of \citet{gasse_exact_2019}, as shown in Figure \ref{fig:teasing}.

\section{Problem statement} 
    
    \subsection{Mixed-integer linear programming}
    \label{sec:b&b}
    We consider mixed-integer linear programs (MILPs), defined as: 
        \[ P : 
            \left\{
                \begin{array}{ll}
                    \min c^\top x \\
                    l \leq x \leq u \\
                     Ax \leq b \; ; \; x \in \mathbb{Z}^{|\mathcal{I}|} \times \mathbb{R}^{n-|\mathcal{I}|} \\

                \end{array}
            \right.
        \]
    with~$n$ the number of variables,~$m$ the number of linear constraints,~$l, u \in \mathbb{R}^n$ the lower and upper bound vectors,~$A\in \mathbb{R}^{m\times n}$ the constraint matrix,~$b\in \mathbb{R}^m$ the right-hand side vector,~$c\in \mathbb{R}^n$ the objective function, and~$\mathcal{I}$ the indices of integer variables. Throughout this document, we are interested in repeated MILPs of fixed dimension~$\{P_{i} = (A_i, b_i, c_i, l_i, u_i)\}_{i\in N}$ sampled according to an unknown distribution~$p_0 : \Omega \rightarrow \mathbb{R}^{m\times n} \times \mathbb{R}^{m} \times \mathbb{R}^{n} \times \mathbb{R}^{n} \times \mathbb{R}^{n}$. 
    
    In order to solve MILPs efficiently, the B\&B algorithm iteratively builds a binary tree $(\mathcal{V}, \mathcal{E}) $ where each node corresponds to a MILP, starting from the root node $v_0 \in \mathcal{V}$ representing the original problem $P_0$. 
    The incumbent solution $\bar{x} \in \mathbb{Z}^{|\mathcal{I}|} \times \mathbb{R}^{n-|\mathcal{I}|}$ denotes the best feasible solution found at current iteration, its associated value $GU\!B = c^\top \bar{x}$ is called the \textit{global upper bound} on the optimal value. 
    The overall state of the optimization process is thus captured by the triplet $s = (\mathcal{V}, \mathcal{E}, \bar{x})$, we note $\mathcal{S}$ the set of all such triplets.\footnote{To account for early resolution steps where no incumbent solution has yet been found, we define a special value for $\bar{x}$, whose $GU\!B=\infty$. For the sake of simplicity, we make this implicit in the remainder of the paper.} 
    Throughout the optimization process, B\&B nodes are explored sequentially. We note $\mathcal{C}$ the set of visited or closed nodes, and $\mathcal{O}$ the set of unvisited or open nodes, such that $\mathcal{V} = \mathcal{C} \cup \mathcal{O}$. Originally, $\mathcal{O} = \{ v_0\}$ and $\mathcal{C} = \emptyset$.
    At each iteration, the node selection policy $\rho : \mathcal{S} \rightarrow \mathcal{O}$ selects the next node to explore. Since $\rho$ necessarily defines a total order on nodes $o_i \in \mathcal{O}$, we can arrange indices such that $o_1 = \rho(s)$ denotes $v_t$ the node currently explored at step $t$. 
    
    Figure~\ref{fig:BB} illustrates how B\&B operates on an example.
    At each iteration, let $x^*_{LP} \in \mathbb{R}^n$ be the optimal solution to the linear relaxation of~$P_{t}$, the problem associated with $v_t$. 
    If $P_t$ admits no solution, 
    $v_t$ is marked as visited and the branch is pruned by infeasibility. If $x^*_{LP} \in \mathbb{R}^n$ exists, and $GU\!B < c^\top x^*_{LP}$, no integer solution in the subsequent branch can improve $GU\!B$, thus $v_t$ is marked as closed and the branch is pruned by bound.
    If $x^*_{LP}$ is not dominated by $\bar{x}$ and $x^*_{LP}$ is feasible (all integer variables in $x^*_{LP} \in \mathbb{R}^n$ have integer values), a new incumbent solution $\bar{x} = x^*_{LP}$ has been found.
    Hence $GU\!B$ is updated and $v_t$ is marked as visited while the branch is pruned by integrity.
    Else, $x^*_{LP}$ admits fractional values for some integer variables. The branching heuristic $\pi: \mathcal{S} \rightarrow \mathcal{I}$ selects a variable $x_b$ with fractional value $\hat{x}_b$, to partition the solution space. As a result, two child nodes $(v_{-}, v_{+})$, with associated MILPs~$P_{-} = P_{t} \, \cup \, \{ x_b \leq \lfloor \hat{x}_b \rfloor \}$ and~$P_{_{+}} = P_{t} \, \cup \, \{ x_b \geq \lceil \hat{x}_b \rceil\}$, are added to the current node.\footnote{$\hat{x}_b$ denotes the value of $x_b$ in $x^*_{LP}$. We use the symbol $\cup$ to denote the refinement of the bound on $x_b$ in $P_{t}$.} Their linear relaxation is solved, before they are added to the set of open nodes $\mathcal{O}$ and $v_t$ is marked as visited.

\begin{figure*}[t]
    \centering
    \includegraphics[width=\textwidth]{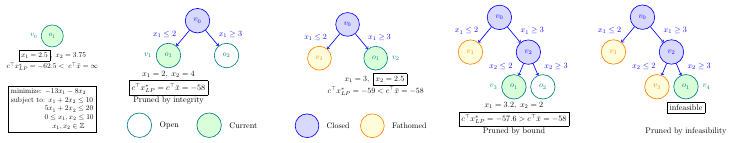}
    \caption{Solving a MILP by B\&B using variable selection policy $\pi$ and node selection policy $\rho$. Each node $v_i$ represents a MILP derived from the original problem, each edge represents the bound adjustment applied to derive child nodes from their parent. At each step, nodes $o_i \in \mathcal{O}$ are re-indexed according to $\rho$.}
    \label{fig:BB}
\end{figure*}

    This process is repeated until $\mathcal{O} = \emptyset$ and $\bar{x}$ is returned. The dynamics of the B\&B algorithm between two branching decisions can be described by the function $\kappa_{\rho}: \mathcal{S} \times \mathcal{I} \rightarrow \mathcal{S}$, such that $s' =  \kappa_{\rho}(s, \pi(s))$. By design, B\&B does not terminate before finding an optimal solution and proving its optimality. Consequently, optimizing the performance of B\&B on a distribution of MILP instances is equivalent to minimizing the expected solving time of the algorithm. As~\citet{etheve_solving_2021} evidenced, the variable selection strategy~$\pi$ is by far the most critical B\&B heuristic in terms of computational performance. In practice, the total number of nodes of the B\&B tree is used as an alternative metric to evaluate the performance of branching heuristics $\pi$, as it is a hardware-independent proxy for computational efficiency. 
    Under these circumstances, given a fixed node selection strategy~$\rho$, the optimal branching strategy~$\pi^*$ associated with a distribution~$p_0$ of MILP instances can be defined as: 
    \begin{equation}
    \label{eq:obj}
        \pi^* = \arg \min_{\pi} \mathbb{E}_{P\sim p_0}(|BB_{(\pi, \rho)}(P)|),
    \end{equation}
    with~$|BB_{(\pi,\rho)}(P)|$ the size of the~$B\&B$ tree after solving $P$ to optimality following strategies ($\pi, \rho$).

    \subsection{Reinforcement learning}
    We consider the setting of discrete-time, deterministic MDPs \citep{puterman2014markov} defined by the tuple $(\mathcal{S}, \mathcal{A}, \mathcal{T}, p_0, \mathcal{R})$.
    At each time step $t$, the agent observes $s_t \in \mathcal{S}$ the current state of the environment, before executing action $a_t \in \mathcal{A}$, and receiving reward $r_t = \mathcal{R}(s_t, a_t)$. 
    The Markov transition function $\mathcal{T} : \mathcal{S} \times \mathcal{A} \rightarrow \mathcal{S}$ models the dynamics of the environment. 
    In particular, it satisfies the Markov property: conditionally to $s_t$ and $a_t$, $s_{t+1}$ is independent of all past visited states and actions. 
    Given a trajectory starting in state $s_0$ sampled according to the initial distribution $p_0$, the total gain is defined for all $t\geq 0$ as $G_t = \sum_{t'=t}^{\infty} \gamma^{t'-t} \mathcal{R}(s_{t'}, a_{t'})$, with $\gamma \in [0,1]$. 
    The objective of an RL agent is to maximize the expected gain of the trajectories yielded by its action selection policy $\pi: \mathcal{S} \rightarrow \mathcal{A}$. 
    This is equivalent to finding the policy maximizing value functions $V^{\pi}(s_t) = \mathop{\mathbb{E}}_{a_{t'} \sim \pi(s_{t'})} [G_{t}]$ and $Q^{\pi}(s_t, a_t) = \mathcal{R}(s_{t}, a_{t}) + \gamma V^{\pi}(s_{t+1})$. 
    The optimal Q-value function $Q^*$ indicates the highest achievable cumulated gain in the MDP. It satisfies the Bellman optimality equation $Q(s, a) = \mathcal{R}(s, a) + \gamma \cdot \max_{a' \in \mathcal{A}} Q(s', a')$, noting $s'= \mathcal{T}(s,a)$ for $(s,a) \in \mathcal{S} \times \mathcal{A}$. The optimal policy is retrieved by acting greedily according to the learned $Q$-value function: $\pi^*(s) = \arg \max_{a \in \mathcal{A}} Q^*(s,a)$. 

    \subsection{Related work}
    Following the seminal work by \citet{gasse_exact_2019}, few contributions have proposed to build more complex neural network architectures based on transformers \citep{lin_learning_2022} and recurrence mechanisms \citep{seyfi_exact_2023} to improve the performance of IL branching agents, with moderate success. In parallel, theoretical and computational analysis \citep{bestuzheva_scip_2021, sun_improving_2022} have shown that neural networks trained by imitation could not rival the tree size performance achieved by strong branching (SB), the branching expert used in \citet{gasse_exact_2019}. In fact, low tree sizes associated with SB turn out to be primarily due to the formulation improvements resulting from the massive number of LPs solved in SB, not to the intrinsic quality of the branching decisions themselves. 
    
    Since branching decisions are made sequentially, reinforcement learning appears as a natural candidate to learn good branching policies.
    \citet{etheve_reinforcement_2020} and \citet{scavuzzo_learning_2022} proposed the model of TreeMDP, in which state $s_i = (P_{i}, x^{*}_{LP, i}, \bar{x}_{i})$ consists in the MILP associated with node $v_i$ along with the solution of its linear relaxation and the incumbent solution at $v_i$.
    The actions available at $s_i$ are the set of fractional variables in $x^{*}_{LP, i}$.
    Given $(s_i, a_i)$ the tree Markov transition function produces two child node states $(s^-_{i}, s^+_{i})$ that can be visited in any order.
    Crucially, when the B\&B tree is explored in depth-first-search (DFS), TreeMDP trajectories can be divided in independent subtrees, allowing to learn policies minimizing the size of each subtree independently.
    This helps mitigate credit assignment issues that arise owing to the length of episode trajectories. 
    Subsequently, \citet{parsonson_reinforcement_2022} found the DFS node selection policy to be highly detrimental to the computational performance of RL branching strategies.
    Assuming that RL branching agents trained following advanced node selection strategies would perform better despite the lack of theoretical guarantee, they proposed to learn from retrospective trajectories, diving trajectories built from original TreeMDP episodes.
    In fact, \citet{parsonson_reinforcement_2022} found retrospective trajectories to alleviate the partial observability induced by the ``disordered'' exploration of the tree and outperform prior RL agents.
    
    A large body of work has proposed to learn, either by imitation or reinforcement, better-performing B\&B heuristics outside of variable selection \citep{ nair_solving_2021, paulus_learning_2022}. 
    RL contributions in primal search \citep{sonnerat_learning_2022, wu_deep_2023} node selection \citep{he_learning_2014, etheve_solving_2021} and cut selection \citep{tang_reinforcement_2020, song_general_2020, wang_learning_2023} have all relied on the TreeMDP framework to train their agents, simply adapting the action set to the task at hand. Finally, machine learning applications in combinatorial optimization are not limited to B\&B. For example, in the context of routing or scheduling problems where exact resolution rapidly becomes prohibitive, agents are trained to learn direct search heuristics \citep{kool_attention_2019, grinsztajn_population-based_2022, chalumeau2023combinatorial} yielding high-quality feasible solutions.

\section{Branch and bound Markov decision process}
    \label{sec:tempMDP}
    By using the current B\&B node as the observable state, 
    prior attempts to learn optimal branching strategies have relied on the TreeMDP formalism to train RL agents. 
    However, TreeMDPs are not MDPs, as they do not define a Markov process on the state random variable (for instance, a transition yields two states and is hence not a stochastic process on the state variables). 
    As a result, this forces \citet{etheve_reinforcement_2020} and \citet{scavuzzo_learning_2022} to redefine Bellman updates and derive \textit{ad hoc} convergence theorems for TD(0), value iteration, and policy gradient algorithms.
    In order to leverage broader theoretical results from the reinforcement learning literature, we propose a description of variable selection in B\&B as a proper Markov decision process. 
    
    \subsection{Definition}
    \label{sec:def}
    The problem of finding an optimal branching strategy according to Eq.~\eqref{eq:obj} can be described as a regular deterministic Markov decision process.
    To this end, we introduce Branch and bound Markov decision processes (BBMDP), taking $\gamma = 1$ since episodes horizons are bounded by the (finite) largest possible number of nodes:

    \textbf{State-action space.}  $\mathcal{S}$ is the set of all B\&B trees $s_t = (\mathcal{V}_t, \mathcal{E}_t, \bar{x}_t)$. 
    Note that this includes intermediate B\&B trees, whose incumbent solutions $\bar{x}_t$ are yet to be proven optimal. $\mathcal{A}$ is the set of all integer variables indices $\mathcal{I}$. 
    
    \textbf{Transition function.} The Markov transition function is defined as~$\mathcal{T} = \kappa_\rho$ with $\kappa_\rho$ the branching operation described in Section \ref{sec:b&b}. 
    Note that if the variable associated with $a_t$ is not fractional in $x^*_{LP}$, then $s_{t+1} = \mathcal{T}(s_t, a_t) = s_t$ as relaxing a variable that is not fractional has no impact on the LP relaxation.
    Importantly, all states for which $\mathcal{O}=\emptyset$ are terminal states.
    
    \textbf{Starting states.} Initial states are single node trees, where the root node is associated to a MILP $P_0$ drawn according to the distribution $p_0$ defined in Section~\ref{sec:b&b} (hence the use of $p_0$ for both the initial problem $P_0$ and the MDP's initial state $s_0$).
    
    \textbf{Reward model.}
    We define $\mathcal{R}(s,a)=-2$ for all transitions until episode termination. Since each transition adds two B\&B nodes, the overall value of a trajectory is the opposite of the number of nodes added to the B\&B tree from the root node, which aligns with the definition of Eq.~\eqref{eq:obj}.

     Unlike in TreeMDP, the current state is defined as the state of the entire B\&B tree, rather than merely the current B\&B node. The transition function returns a B\&B tree whose open nodes are sorted according to the node selection policy $\rho$, thus reflecting the true dynamics of the B\&B algorithm, instead of a couple of pseudo-states associated with the child nodes of the last node expansion. Note that the definition above sets BBMDPs among the specific class of MDPs called stochastic shortest path problems \citep{puterman2014markov}.

    \subsection{Misconceptions when learning to explore B\&B trees}
    \label{sec:core}
    
    Like in TreeMDP, episode trajectories can be decomposed in independent subtree trajectories, to facilitate RL agents training. Consider a deterministic branching policy $\pi$, and let us we rewrite $V^{\pi}$ and $Q^{\pi}$ to exhibit the tree structure. 
    Given an open node $o_i \in \mathcal{O}_t$, we note $T(o_i)$ the subtree rooted in $o_i$, and define $\bar{V}^{\pi}(s_t, o_i)$ the 
    function that returns the opposite of the size of the subtree rooted in $o_i$, when branching according to policy $\pi$ starting from state $s_t$ until episode termination. 
    Then $V^\pi$ can be expressed as: 
    \begin{equation}
        \label{eq:v=w}
        V^{\pi}(s_t) 
        = \sum_{o_i \in \mathcal{O}_t} \bar{V}^{\pi}(s_t, o_i).
    \end{equation}
    In plain words, the total number of nodes that will be added to the B\&B tree past $s_t$ is the sum of the sizes of all the subtrees $T(o_i)$ rooted in the open nodes of $s_t$. 
    Because the full B\&B tree is a complex object to manipulate, it is tempting to discard the tree structure in $s_t$ and define $\bar{V}$-value functions merely as functions of $(o_i,\bar{x}_{o_i})$, rather than functions of $(s_t, o_i)$, where $\bar{x}_{o_i} \in \mathbb{R}^n$ is the incumbent solution when $o_i$ is processed by the B\&B algorithm, at time step $\tau_i$. 
    The rationale for such value functions is that the size of the subtree rooted in $o_i \in \mathcal{O}_t$, for a given incumbent solution $\bar{x}_{o_i}$, should be the same, regardless of the parents of $o_i$, its position in the tree, or the branching decisions taken in subtrees $T(o_j)$ for $j \neq i$.
    \textbf{It turns out, this last statement does not always hold}, quite counter-intuitively. 
    Let us write $\tau_i$ the time steps at which the nodes $o_i\in\mathcal{O}_t$ are selected by the node selection strategy $\rho$.\footnote{Following our indexation of $o_i \in \mathcal{O}_t$, we have $t = \tau_1 < ... < \tau_i < ... < \tau_{|\mathcal{O}_t|}$.}
    Now, consider for instance a node selection procedure $\rho$ that performs a breadth-first search through the tree. 
    The number of nodes in $T(o_i)$ will depend strongly on whether an improved incumbent solution $\bar{x}_{o_i}$ was found in the subtrees explored between $s_t$ and $s_{{\tau}_i}$, and, in turn, on the branching decisions taken in these subtrees.
    This example highlights the major importance of the node selection strategy $\rho$, when one wishes to define subtree sizes based solely on $(o_i,\bar{x}_{o_i})$.
    
    Consider now two open nodes $o_i$ and $o_j$ in $\mathcal{O}_t$. 
    Conversely to the previous example, if one can guarantee that the subtree rooted in $o_j$ will be solved to optimality before $o_i$ is considered for expansion in the B\&B process, then the number of nodes in $T(o_i)$ will not be affected by the branching decisions taken at any node under $o_j$.
    In fact, if $o_j$ is solved to optimality, $\bar{x}_{o_i}$ will either not change if no feasible solution in $T(o_j)$ improves $GU\!B$, or either be the best feasible solution of the MILP associated with $o_j$, which does not depend on the series of actions taken in $T(o_j)$.
    In other words, to make sure that the size of $T(o_i)$ does only depend on the branching decisions taken in $T(o_i)$, all nodes $o_j \in \mathcal{O}_{t}$ must have been either fully explored or strictly unexplored at $\tau_i$.
    Applying this argument recursively induces that the only node selection strategy which enables predicting a subtree size only based on $(o_i,\bar{x}_{o_i})$, is a depth-first search (DFS) exploration of the B\&B tree. 
    Therefore, we consider $\rho=DFS$ and write $\bar{V}^\pi(o_i,\bar{x}_{o_i})$ the opposite of the size of $T(o_i)$ in this context.
    We can now derive a refined Bellman update to train branching agents in BBMDP. 
    \begin{proposition}
        \label{prop:BB_mdp}
        In DFS-BBMDP, the Bellman equation $V^\pi(s) = \mathcal{R}(s, \pi(s)) + V^\pi(s')$ yields: 
        \begin{equation}
            \label{eq:prop2}
            \bar{V}^{\pi}(o_1, \bar{x}_{o_1}) = -2 + \bar{V}^{\pi}(o'_1, \bar{x}_{o'_1}) + \bar{V}^{\pi}(o'_2, \bar{x}_{o'_2}),
        \end{equation}
        with $o_1=\rho(s)$, $o'_1 = \rho(\mathcal{T}(s,\pi(s))$, and $o'_2$ is the sibling of $o'_1$ in the B\&B tree.
    \end{proposition}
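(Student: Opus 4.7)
The plan is to start from the standard Bellman equation of the BBMDP and use the subtree decomposition of Eq.~\eqref{eq:v=w} to isolate the contribution of the node being branched on.

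Since $\mathcal{R}(s,a) = -2$ for every non-terminal transition, the Bellman equation reads $V^{\pi}(s) = -2 + V^{\pi}(s')$, where $s' = \mathcal{T}(s, \pi(s))$. I would first apply Eq.~\eqref{eq:v=w} to both sides. On the left, $V^{\pi}(s) = \bar{V}^{\pi}(s, o_1) + \sum_{i \geq 2} \bar{V}^{\pi}(s, o_i)$. On the right, the open set of $s'$ is $\{o'_1, o'_2\} \cup \{o_2, \dots, o_{|\mathcal{O}_t|}\}$, i.e.\ $o_1$ has been replaced by its two children, so $V^{\pi}(s') = \bar{V}^{\pi}(s', o'_1) + \bar{V}^{\pi}(s', o'_2) + \sum_{i \geq 2} \bar{V}^{\pi}(s', o_i)$.

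Second, I would show that the tail sums cancel: for every $i \geq 2$, $\bar{V}^{\pi}(s, o_i) = \bar{V}^{\pi}(s', o_i)$. This is the step where DFS is essential. Because $\rho = \text{DFS}$, both subtrees $T(o'_1)$ and $T(o'_2)$ are fully explored before any $o_i$ with $i \geq 2$ is ever selected. Consequently, the sequence of nodes processed between $s$ (resp.\ $s'$) and the moment $o_i$ is eventually selected differs only by the inclusion of the branching step at $o_1$ itself, which touches no node of $T(o_i)$. Hence the incumbent $\bar{x}_{o_i}$ seen when $o_i$ is processed is the same in both trajectories (by the argument of Section~\ref{sec:core}), and the set of nodes that will be added to $T(o_i)$ past $s$ coincides with the one past $s'$. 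The $i \geq 2$ contributions therefore vanish from both sides, leaving $\bar{V}^{\pi}(s, o_1) = -2 + \bar{V}^{\pi}(s', o'_1) + \bar{V}^{\pi}(s', o'_2)$.

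Third, I would invoke the main conclusion of Section~\ref{sec:core}: under DFS, each $\bar{V}^{\pi}(s, o)$ is a function only of $(o, \bar{x}_{o})$. Rewriting the three surviving terms in this shorthand gives exactly Eq.~\eqref{eq:prop2}.

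The main obstacle, and the only nontrivial step, is the cancellation in the second paragraph. Its correctness hinges on simultaneously (i) noting that the branching step at $o_1$ does not alter the MILP or incumbent status of any $o_i$ with $i \geq 2$, and (ii) using the DFS traversal order to guarantee that the incumbent visible to $o_i$ when it is finally processed is identical whether one starts the countdown at $s$ or at $s'$. I would state (i) and (ii) carefully, since without DFS this identification fails (as illustrated by the breadth-first counterexample in Section~\ref{sec:core}), and the Bellman update of Eq.~\eqref{eq:prop2} would no longer be well-defined.
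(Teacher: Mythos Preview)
Your proposal is correct and follows the same approach as the paper: inject the decomposition of Eq.~\eqref{eq:v=w} into the Bellman equation and cancel the matching tail terms. The paper's proof is a one-line version of what you wrote, stating only that ``most terms in the sums simplify as $\bar{V}^\pi(o_i, \bar{x}_{o_i}) = \bar{V}^\pi(o'_{i+1}, \bar{x}_{o'_{i+1}})$ for $i \geq 2$'' (with the open nodes of $s'$ re-indexed so that the old $o_i$ becomes $o'_{i+1}$); your explicit justification of this cancellation via the DFS argument is more thorough but not additional content.
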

    \begin{proof}
        Eq.~\eqref{eq:prop2} follows directly from injecting Eq.~\eqref{eq:v=w} in the Bellman equation, and observing that most terms in the sums simplify as $\bar{V}^\pi(o_{i}, \bar{x}_{o_i}) = \bar{V}^\pi(o'_{i+1}, \bar{x}_{o'_{i+1}})$ for $i \geq 2$.
    \end{proof}
    Keeping the same notation convention for $o_1$, $o'_1$ and $o'_2$, we define $\bar{Q}^\pi(o_1, \bar{x}_{o_1}, a) = -2 + \bar{V}^{\pi}(o'_1, \bar{x}_{o'_1}) + \bar{V}^{\pi}(o'_2, \bar{x}_{o'_2})$. Analogous to $\bar{V}^\pi$, the $\bar{Q}^\pi$ function returns the opposite of the size of the subtree rooted in $o_i \in \mathcal{O}_t$ when branching on action $a \in \mathcal{A}$ at $o_i$ and following policy $\pi$ until $T(o_i)$ is fathomed.
    Note that if $\bar{V}^\pi$ and $\bar{Q}^\pi$ are not strictly value functions, they naturally emerge when applying Bellman equations to BBMDP value functions under $\rho = DFS$.
    We stress that, in depth-first search BBMDPs, it is not necessary to learn $Q^*$ to derive $\pi^*$, since acting according to a policy minimizing the size of the subtree rooted in the current B\&B node is equivalent to acting
    according to a global optimal policy:
    \begin{equation}
        \label{eq:pi_star}
        \pi^*(s) = \arg \max_{a \in \mathcal{A}} \, Q^*(s,\,a) = \arg \max_{a \in \mathcal{A}} \, \bar{Q}^*(o_1, \bar{x}_{o_1}, a).
    \end{equation}


    \begin{figure}[t]
         \centering
         \hspace*{-0.35cm}
         \begin{subfigure}[b]{0.49\textwidth}
             \centering
             \includegraphics[height=4em]{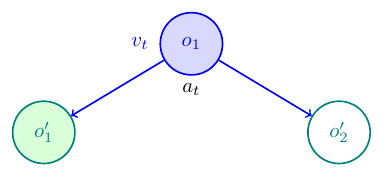}
             \caption{$1$-step BBMDP Bellman equation: \\ $\bar{V}^\pi(o_{1},\bar{x}_{o_1}) = -2 + \bar{V}^\pi(o'_{1},\bar{x}_{o'_1}) + \bar{V}^\pi(o'_2,\bar{x}_{o'_2})$}
             \label{fig:clean_one_step}
         \end{subfigure}
         \hspace*{1mm}
         \begin{subfigure}[b]{0.49\textwidth}
            \centering
             \includegraphics[height=4em]{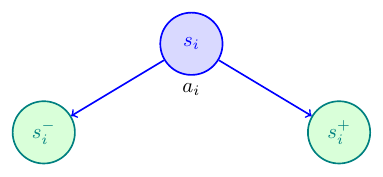}
             \caption{$1$-step TreeMDP Bellman equation:\\ $V^\pi(s_i) = -2 + V^\pi(s^-_i) + V^\pi(s^+_i)$}
             \label{fig:tree_one_step}
         \end{subfigure}
         \begin{subfigure}[b]{0.49\textwidth}
             \includegraphics[height=8em]{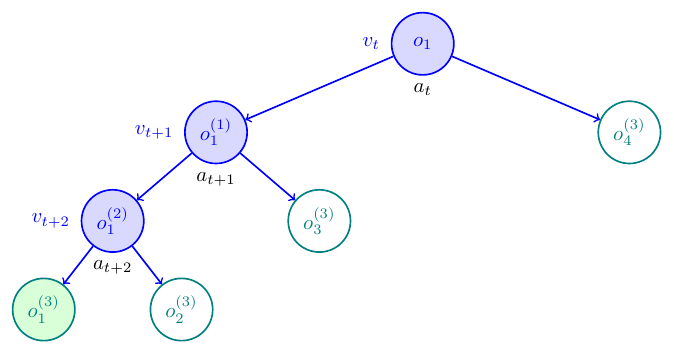}
             \caption{$k$-step BBMDP Bellman equation: \\ $\bar{V}^\pi(o_1,\bar{x}_{o_1})
             = -2k + \sum_{i=1}^{k+1}  \bar{V}^\pi(o^{(k)}_i,\bar{x}_{o^{(k)}_i})
             $}
             \label{fig:v_tree}
         \end{subfigure}
         \begin{subfigure}[b]{0.49\textwidth}
             \centering
             \includegraphics[height=8em]{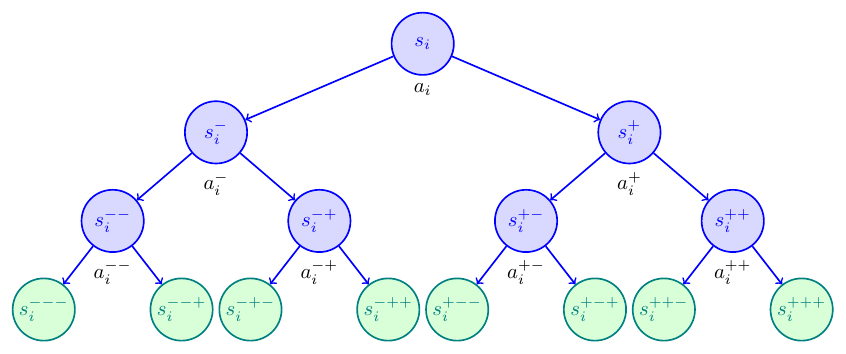}
             \caption{$k$-step TreeMDP Bellman equation: $\\ V^\pi(s_0) = -2^1 -2^2 - ... - 2^{k} + \sum_{i=1}^{2^k} V^\pi(s^j_i)$}
             \label{fig:w_tree}
         \end{subfigure}
        \caption{\greent{When minimizing 1-step temporal difference}, TreeMDP and BBMDP yield equivalent results, see \ref{fig:clean_one_step}, \ref{fig:tree_one_step}. However, over $k$-step trajectories, the two methods diverge as shown in \ref{fig:v_tree}, \ref{fig:w_tree}.}
        \label{fig:tree vs clean}
    \end{figure}

    \subsection{Approximate dynamic programming}
    \label{sec:bellman}
    The previous properties enable learning $\pi^*$ by training a neural network to approximate $\bar{Q}^*$ using traditional temporal difference (TD) algorithms. 
    Notably, $\bar{Q}^*$ is easier to learn than $Q^*$, as it relies solely on quantities observable at time $t$, whereas the previous decomposition of $Q^*$ depends on all $\bar{x}_{o_i}$ for $o_i \in \mathcal{O}_t$, which are not known at time $t$. 
    Moreover, $\bar{Q}^*$ trains on much shorter trajectories than $Q^*$, which helps mitigate credit assignement issues.\footnote{On average, the length of subtree trajectories is logarithmically shorter than the length of BBMDP episodes.}
    
    Consider a transition $(s,\, a, \, r,\, s')$. Following Eq.~\eqref{eq:prop2}, the Bellman optimality equation yields a sequence of $\bar{Q}_n$ functions via dynamic programming updates of the form: 
    \begin{align}
        \bar{Q}_{n+1}(o_1, \bar{x}_{o_1}, a) = -2 + \max_{a' \in \mathcal{A}} \bar{Q}_n(o'_1, \bar{x}_{o'_1}, a') + \max_{a'' \in \mathcal{A}} \bar{Q}_n(o'_2, \bar{x}_{o'_2}, a'').
        \label{eq:bell}
    \end{align}
    One can also consider a $k$-step Bellman operator ($k\geq1$), generalizing Eq.~\eqref{eq:bell}. 
    Let $\pi$ be a policy, and $s^{(k)}$ the state reached after a first application of $a$ from $s$, and $k-1$ subsequent applications of $\pi$. 
    Provided the subtree rooted in $o_1$ has not been fathomed within these $k$ time steps, $s^{(k)}$ has $k+1$ new open nodes which we label $o_i^{(k)}$. 
    Then, the Bellman update becomes:
    \begin{equation}
        \bar{Q}_{n+1}(o_1, \bar{x}_{o_1}, a) = -2k + \sum_{i=1}^{k+1} \max_{a' \in \mathcal{A}} \bar{Q}_n(o^{(k)}_i, \bar{x}_{o^{(k)}_i}, a').
    \end{equation}
    This Bellman update can be approximated using a neural network $\mathbf{q}_\theta$, whose weights are trained to minimize the $k$-step temporal difference loss $\mathcal{L}(\theta) = \mathbb{E}[(\mathbf{q}_\theta(o_1, \bar{x}_{o_1}, a) - \bar{Q}_{n+1}(o_1, \bar{x}_{o_1}, a) )^2]$, where $\bar{Q}_{n+1}$ is bootstrapped by the previous $\mathbf{q}_\theta$. 
    Following the work of \citet{farebrother_stop_2024} on training value functions via classification, we also introduce a HL-Gauss cross-entropy loss adapted to the B\&B setting $\mathcal{L}(\theta) = \mathbb{E} [\mathbf{q}_{\theta}(o_1, \bar{x}_{o_1}, a) \cdot \log p_{hist}( \bar{Q}_{n+1}(o_1, \bar{x}_{o_1}, a) ) ]$ where $p_{hist}$ is the function that encodes values in categorical histogram distributions, see Appendix \ref{app:loss} for complete description and theoretical motivation.

    \subsection{BBMDP vs TreeMDP}
    \label{sec:treevsbb}
       As it evacuates the core MDP notions of temporality and sequentiality, TreeMDP fails to describe variable selection in B\&B accurately in the general case.
       This is illustrated in Figure \ref{fig:tree vs clean} and further detailed in Appendix \ref{app:BBvsTree}: although the TreeMDP model is a valid approximation of BBMDP when training an RL agent to minimize temporal difference on one-step trajectories, it produces inconsistent learning schemes when evaluating temporal difference across multi-steps trajectories. 
       In fact, repeatedly applying the tree Bellman operator from \citet{etheve_solving_2021} yields B\&B trees that cannot, in general, be produced by DFS, as shown in Figure \ref{fig:tree vs clean} (d).
       
       Hence, BBMDP leverages the results first established by \citet{etheve_reinforcement_2020} and \citet{scavuzzo_learning_2022} --- in DFS, minimizing the whole B\&B tree size is achieved when any subtree is of minimal size (Eq.~\eqref{eq:pi_star}) --- all while preserving MDP properties.
       Crucially, BBMDP allows to harness RL algorithms that are not compatible with the TreeMDP framework, such as \greent{$k$-step return temporal difference}, TD($\lambda$), or any RL algorithms using MCTS as policy improvement operator \citep{grill_monte-carlo_2020}, as illustrated in Appendix \ref{app:BBvsTree}.
       In the same fashion, BBMDP can be applied to augment the pool of RL algorithms available for learning improved cut selection and primal search heuristics, simply by adapting the action set and the reward model to the task at hand.
       Complete exploration and comparison of such algorithms is beyond the scope of this paper and left for future work, as our primary objective here is to study the conditions under which B\&B search can be cast as an MDP, and provide the community with a solid basis for principled algorithms.

\section{Experiments}
\label{sec:exp}
    We now compare our branching agent against prior IL and RL approaches. 
    For our experiments, we use the open-source solver SCIP 8.0.3 \citep{bestuzheva_scip_2021} as backend \greent{MILP} solver, along with the Ecole library \citep{prouvost_ecole_2020} \greent{both} for instance generation and \greent{environment simulation}. 
    To foster reproducibility, our implementation and pretrained models are made publicly available at \href{https://github.com/abfariah/bbmdp}{https://github.com/abfariah/bbmdp}.
    \begin{table*}[t]
        \centering
        \small
        \caption{Performance comparison of branching agents on four standard MILP benchmarks. For each method, we report total number of B\&B nodes, presolve time and total solving time outside of presolve. Lower is better, \textcolor{purple}{\textbf{red}} indicates best agent overall, \textcolor{blue}{\textbf{blue}} indicates best among RL agents. Presolve is common to all methods. Following prior works, we report geometrical mean over 100 test instances unseen during training and over \greent{100} higher-dimensional transfer instances. \greent{Norm. Score denotes the aggregate average performance obtained by each agent across the four MILP benchmarks, normalized by the score of DQN-BBMDP.}}
        \label{tab:results}
        \begin{adjustbox}{width=\columnwidth,center}
        \begin{tabular}{ccccccccccc} 
            \multicolumn{11}{c}{Test instances} \\
            \toprule
             & \multicolumn{2}{c}{\textbf{Set Covering}} & \multicolumn{2}{c}{\textbf{Comb. Auction}} & \multicolumn{2}{c}{\textbf{Max. Ind. Set}} & \multicolumn{2}{c}{\textbf{Mult. Knapsack}} & \multicolumn{2}{c}{\textbf{Norm. Score}} \\ 
            Method & Node & Time & Node & Time & Node & Time & Node & Time & Node & Time\\
            \toprule
            Presolve & $-$ & $4.74$ & $-$  & $0.90 $ & $-$  & $1.78 $ & $-$  & $0.20$ & $-$ & $-$ \\
            \midrule
            Random & $3289$ & $5.94$ & $1111$ & $2.16$ & $386.8$ & $2.01$ & $733.5$ & $0.55$ & $\greent{995}$ & $\greent{374}$ \\
            SB & $35.8 $ & $12.93$ & $28.2$ & $6.21 $ & $24.9$ & $45.87$ & $161.7$ & $0.69$& $\greent{36}$ & $\greent{2358}$ \\
            SCIP & $62.0 $ & $2.27 $ & $20.2 $ & $1.77$ & $19.5$ & $2.44 $ & $289.5 $ & \textcolor{purple}{$\mathbf{0.53}$} & $\greent{51}$ & $\greent{253}$\\
            \midrule
             IL & \textcolor{purple}{$\mathbf{133.8}$}  & $0.90 $ & \textcolor{purple}{$\mathbf{83.6}$} & $0.65$ & \textcolor{purple}{$\mathbf{40.1}$} & \textcolor{purple}{$\mathbf{0.36}$} & {$272.0$} &{$0.69$} & \textcolor{purple}{$\mathbf{82}$} & \textcolor{purple}{$\mathbf{95}$}\\
             IL-DFS & $136.4$ & \textcolor{purple}{$\mathbf{0.74}$} & $95.5$ & \textcolor{purple}{$\mathbf{0.56}$} & $69.4$ & $0.44$ & $472.8$ & $1.07$ & $\greent{114}$ & \greent{$129$} \\
            \midrule
            PG-tMDP & $649.4 $ & $2.32 $ & $168.0 $ & $0.94 $ & $153.6 $ & $0.92$ & $436.9$ & $1.57$ & \greent{$233$} & \greent{$206$}\\
            DQN-tMDP & $175.8 $ & $0.83 $ & $203.3$ & $1.11$ & $168.0$ & $1.00$ & $266.4$ & $0.73$ & \greent{$151$} & \greent{$136$}\\
            DQN-Retro & $183.0 $ & $1.14 $ & $103.2$ & $0.78$ & $223.0$ & $1.81$ & $250.3$ & $0.67$ & \greent{$137$} & \greent{$160$}\\
            DQN-BBMDP & \textcolor{blue}{$\mathbf{152.3}$} & \textcolor{blue}{$\mathbf{0.77}$} & 
            \textcolor{blue}{$\mathbf{97.9}$} & \textcolor{blue}{$\mathbf{0.62}$} & \textcolor{blue}{$\mathbf{103.2}$} & \textcolor{blue}{$\mathbf{0.69}$} & \textcolor{purple}{$\mathbf{236.6}$} & \textcolor{blue}{$\mathbf{0.66}$} & \textcolor{blue}{$\mathbf{100}$} & \textcolor{blue}{$\mathbf{100}$}\\
            \bottomrule[1.5pt]
        \hspace{5mm}
        \end{tabular} 
        \end{adjustbox}
        \begin{adjustbox}{width=\columnwidth,center}
        \begin{tabular}{ccccccccccc} 
            \multicolumn{11}{c}{Transfer instances} \\
            \toprule
             & \multicolumn{2}{c}{\textbf{Set Covering}} & \multicolumn{2}{c}{\textbf{Comb. Auction}} & \multicolumn{2}{c}{\textbf{Max. Ind. Set}} & \multicolumn{2}{c}{\textbf{Mult. Knapsack}} & \multicolumn{2}{c}{\textbf{Norm. Score}} \\ 
            Method & Node & Time & Node & Time & Node & Time & Node & Time & Node & Time\\
            \toprule
            Presolve & - & $\greent{12.3}$ & - & $\greent{2.67}$ & - & $\greent{5.16}$ & - & $\greent{0.46}$& $-$ & $-$\\
            \midrule
            Random & $271632$  & $842$ &  $317235$  & $749$ & $215879$ & $2102$ & $93452$ & $70.6$& \greent{$5555$} & \greent{$2737$}\\
            SB & $672.1$ & $398$ & $389.6$  & $255$ & $169.9$ & $2172$ & \textcolor{purple}{$\mathbf{1709}$} & \textcolor{purple}{$\mathbf{12.5}$} & \greent{$9$} & \greent{$1425$}\\
            SCIP & $3309$  & $48.4$ &  $1376$  & $14.77$ & $3368$ & $90.0$ & $30620$ & $22.1$ & \greent{$62$} & \greent{$90$}\\
            \midrule
             IL & \textcolor{purple}{$\mathbf{2610}$}  & {$23.1$} &  \textcolor{purple}{$\mathbf{1309}$}  & \textcolor{purple}{$\mathbf{9.4}$} & \textcolor{purple}{$\mathbf{1882.0}$} & \textcolor{purple}{$\mathbf{38.6}$} & $9747$ & $43.5$ & \textcolor{purple}{$\mathbf{39}$} & \textcolor{purple}{$\mathbf{54}$}\\
             IL-DFS & $3103 $ & \textcolor{purple}{$\mathbf{22.5}$} & $1802 $ & $10.2 $ & $3501$ & $51.9$& $43224$ & $131$ & \greent{$75$} & \greent{$80$}\\
            \midrule
            PG-tMDP & $44649 $ & $221 $ & $6001 $ & $30.7 $ & \textcolor{blue}{$\mathbf{3133}$} & \textcolor{blue}{$\mathbf{39.5}$} & $35614$ & $123$ & \greent{$298$} & \greent{$223$}\\
            DQN-tMDP & $8632$  & $71.3$ &  $20553$  & $116$ & $45634$ & $477$ & \textcolor{blue}{$\mathbf{22631}$} & \textcolor{blue}{$\mathbf{65.1}$} & \greent{$439$} & \greent{$445$}\\
            DQN-Retro & $6100$  & $59.4$ &  $2908$  & $18.4$ & $119478$ & $1863$ & $27077$ & {$79.5$} & \greent{$494$} & \greent{$662$}\\
            DQN-BBMDP & \textcolor{blue}{$\mathbf{5651}$}  & \textcolor{blue}{$\mathbf{46.4}$} & \textcolor{blue}{$\mathbf{2273}$} & \textcolor{blue}{$\mathbf{11.8}$} & $7168$ & $81.3$ & $37098$ & $109$ & \textcolor{blue}{$\mathbf{100}$} & \textcolor{blue}{$\mathbf{100}$}\\
            \bottomrule[1.5pt]
        \end{tabular}
        \end{adjustbox}
    \end{table*}

    \subsection{Experimental setup}
    \label{sec:exp_setup}
    \textbf{Benchmarks.}   We consider the usual standard MILP benchmarks for learning branching strategies: set covering, combinatorial auctions, maximum independent set and multiple knapsack problems. 
    We train and test on instances of same dimensions as \citet{scavuzzo_learning_2022} and \citet{parsonson_reinforcement_2022}, see Appendix \ref{app:milps}. 
    As to SCIP configuration, as in previous work, we set the time limit to one hour, disable restart, and deactivate cut generation beyond root node. 
    All the other parameters are left at their default value.
    
    \textbf{Baselines.}   We compare our DQN-BBMDP agent against DQN-TreeMDP (DQN-tMDP) \citep{etheve_reinforcement_2020}, REINFORCE-TreeMDP (PG-tMDP) \citep{scavuzzo_learning_2022} and the current state-of-the-art DQN-Retro \citep{parsonson_reinforcement_2022} agents. 
    We also compare against the reference IL expert from \citet{gasse_exact_2019} as well as IL-DFS, which is the same expert, only evaluated following a depth-first search node selection policy.
    More details on these baselines can be found in Appendix \ref{app:baseline}.
    Finally, we report the performance of reliability pseudo cost branching (SCIP), the default branching heuristic used in SCIP, strong branching (SB) \citep{applegate_finding_1995}, the greedy expert from which the IL agent learns from, and random branching (Random), which randomly selects a fractional variable. 

    \textbf{Network architecture.} Following prior work, we use the bipartite graph representation introduced by \citet{gasse_exact_2019} augmented by the features proposed in \citet{parsonson_reinforcement_2022} to represent B\&B nodes.
    Additionally, we use the graph convolutional architecture of \citet{gasse_exact_2019} to parameterize our $Q$-value network; see Appendix \ref{app:features} for a more detailed description.
    
    
    \textbf{Training \& evaluation.} Models are trained on instances of each benchmark separately, and evaluated on test instances and transfer instances. Validation curves can be found in Appendix \ref{app:training_curves}.
    For evaluation, we report the node and time performance over 100 test instances unseen during training, as well as on \greent{100} transfer instances of higher dimensions (see Table \ref{tab:milps} in Appendix \ref{app:milps}). 
    \greent{At evaluation, performance scores are averaged over 5 seeds.}
    Importantly, when comparing a machine learning (IL or RL) branching strategy with a standard SCIP heuristic, time performance is the only relevant criterion.
    In fact, when implementing one of its own branching rules, SCIP triggers a series of techniques strengthening the current MILP formulation. If these techniques effectively reduce the number of nodes to visit, they incur computational overhead which ultimately increases SCIP overall solving time. This renders node comparisons between ML and non-ML branching strategies negligible relative to solving time evaluations, as observed by \citet{gamrath2018measuring, scavuzzo_learning_2022}.

    \begin{table*}[t]
        \centering
        \small
        \caption{Ablation impact of BBMDP, HL-Gauss loss and DFS. We remove one component one at the time, and evaluate corresponding versions on 100 set covering test instances after training for $200$k gradient steps as described in section \ref{sec:exp_setup}.}
        \label{tab:ablation}
        \begin{tabular}{c|ccccc} 
            $k$-step return & DQN-BBMDP &  w.o. DFS & w.o. HL-Gauss & w.o. BBMDP & DQN-TreeMDP \\
            \toprule
            $k = 1$ & $158.9 $ & $156.2\, (-2\%)$ & $175.8 (+10\%)$ & $158.9 (+0\%)$ & $175.8 (+10\%)$ \\
            $k = 3$ & $\mathbf{152.3} (-4\%) $ & $150.1 \, (-5\%)$ & $172.3 \, (+8\%)$ & $162.1 \, (+2\%)$  & $178.9 \,(+13\%)$    \\
            \bottomrule
        \end{tabular}
    \end{table*}

    \subsection{Results}
     Computational results obtained on the four benchmarks are presented in Table \ref{tab:results}. 
     On test instances, \textbf{DQN-BBMDP consistently obtains best performance} among RL agents. When compared against prior state-of-the-art DQN-Retro, DQN-BBMDP achieves an \greent{aggregate} average 27\% reduction of total number of node and 38\% reduction of solving time outside presolve \greent{across the four Ecole benchmarks}, as reported in Figure \ref{fig:teasing}.
     Contrary to \citet{parsonson_reinforcement_2022}, we find DQN-Retro to yield performance comparable to DQN-tMDP. Remarkably, all RL agents outperform the SCIP solver on 3 out of 4 benchmarks in terms of solving time. 
     
     On transfer instances, DQN-BBMDP also dominates among RL agents, although it is outperformed by PG-tMDP on maximum independent set instances and by DQN-Retro on multiple knapsack instances. \greent{The aggregate performance gap between DQN-BBMDP and other RL baselines is notably wider on transfer instances, which aligns with the advantages of using a principled MDP formulation over TreeMDP.}
     \greent{In fact, DQN-BBMDP is the first RL agent to demonstrate robust generalization capabilities on higher dimensional instances, outperforming SCIP on 3 out of 4 benchmarks.}

     Additional performance metrics are provided in Appendix \ref{app:results}. 
     A notable metric is the number of wins, \emph{i.e.} the number of test instances that one algorithm solves faster than any other baseline. 
     DQN-BBMDP outperforms even the IL approaches both in average solving time and total number of wins, on the set covering, combinatorial auction and multiple knapsack test instances.


    
    \subsection{Ablation study}
    \label{sec:ablation}
    We perform an ablation study on set covering test instances to separate the performance gain associated with BBMDP and the HL-Gauss classification loss. 
    Since BBMDP and TreeMDP are strictly equivalent when minimizing one-step temporal difference, 
    we evaluate the performance gap between one-step and $k$-step TD learning for both DQN-BBMDP and DQN-TreeMDP.
    
    As shown in Table \ref{tab:ablation}, we find that the bulk of the performance gain is brought by the use of a cross-entropy loss. 
    Nonetheless, we observe that the use of a multi-step TD loss improves the performance of DQN-BBMDP, but degrades the performance of DQN-TreeMDP. This supports our initial claim that, in the general case, despite improving the empirical properties of RL algorithms, TreeMDP introduces approximations which hinder the asymptotic performance achievable by RL agents.
    Following \citet{parsonson_reinforcement_2022}, we also evaluate the cost of opting for depth-first search instead of best estimate search, SCIP's default node selection policy, when learning branching strategies.
    Contrary to their work, we find DFS not to be restrictive in practice in terms of performance. We further investigate theses discrepancies in Appendix \ref{app:parsonson}.

\section{Conclusion and perspectives}
\label{sec:conclusion}

Guiding combinatorial optimization (CO) algorithms with reinforcement learning (RL) has proven challenging, including beyond mixed-integer linear programming (MILP) \citep{berto_rl4co_2023}.
Not only are RL agents consistently outperformed by human-expert CO heuristics or IL agents trained to mimic these experts, but their application has also been limited so far to fairly easy problem instances. 
This begs for a thorough study of these problems' properties, for well-posed formulations and principled methods. 
In this work, we showed the theoretical and practical limits of the concept of TreeMDP for learning optimal branching strategies in MILP, highlighting in which context TreeMDPs were a valid formulation.
Introducing BBMDP, we proposed a rigorous description of variable selection in B\&B, unlocking the use of vanilla dynamic programming methods.
In turn, the resulting DQN-BBMDP method outperformed prior RL-inspired agents on the Ecole benchmark. 

Nonetheless, there remains significant room for improvement for RL approaches on both test and transfer instances, suggesting that the generalization capacity of RL agents still lags behind that of IL. 
We believe this to be mainly due to out of distribution effects, since transfer instances are intrinsically more complex than training instances, and $Q$-networks are lead to evaluate B\&B nodes with subtree sizes far exceeding those encountered during training. In contrast, IL networks learn to predict the strong branching action via behavioral cloning, making them robust to such out-of-distribution scaling effects.
Yet, we believe that building on a principled MDP formulation of variable selection in B\&B is a stepping stone to achieve substantial acceleration of solving time for higher-dimensional MILPs in the future, also opening avenues to tackle distribution shift and domain adaptation. 



\bibliography{references}

\begin{thebibliography}{50}
\providecommand{\natexlab}[1]{#1}
\providecommand{\url}[1]{\texttt{#1}}
\expandafter\ifx\csname urlstyle\endcsname\relax
  \providecommand{\doi}[1]{doi: #1}\else
  \providecommand{\doi}{doi: \begingroup \urlstyle{rm}\Url}\fi

\bibitem[Achterberg and Wunderling(2013)]{achterberg2013mixed}
Tobias Achterberg and Roland Wunderling.
\newblock Mixed integer programming: Analyzing 12 years of progress.
\newblock In \emph{Facets of combinatorial optimization: Festschrift for martin gr{\"o}tschel}, pages 449--481. Springer, 2013.

\bibitem[Applegate et~al.(1995)Applegate, Bixby, Chvátal, and Cook]{applegate_finding_1995}
David Applegate, Robert Bixby, Vašek Chvátal, and William Cook.
\newblock \emph{Finding cuts in the {TSP} ({A} preliminary report)}, volume~95.
\newblock Citeseer, 1995.

\bibitem[Balas and Ho(1980)]{balas1980set}
Egon Balas and Andrew Ho.
\newblock Set covering algorithms using cutting planes, heuristics, and subgradient optimization: a computational study.
\newblock \emph{Combinatorial Optimization}, pages 37--60, 1980.

\bibitem[Barahona(1982)]{barahona1982computational}
Francisco Barahona.
\newblock On the computational complexity of ising spin glass models.
\newblock \emph{Journal of Physics A: Mathematical and General}, 15\penalty0 (10):\penalty0 3241, 1982.

\bibitem[Bellemare et~al.(2017)Bellemare, Dabney, and Munos]{bellemare2017distributional}
Marc~G Bellemare, Will Dabney, and R{\'e}mi Munos.
\newblock A distributional perspective on reinforcement learning.
\newblock In \emph{International conference on machine learning}, pages 449--458. PMLR, 2017.

\bibitem[Bengio et~al.(2021)Bengio, Lodi, and Prouvost]{bengio_machine_2021}
Yoshua Bengio, Andrea Lodi, and Antoine Prouvost.
\newblock Machine learning for combinatorial optimization: {A} methodological tour d’horizon.
\newblock \emph{European Journal of Operational Research}, 290\penalty0 (2):\penalty0 405--421, April 2021.
\newblock ISSN 0377-2217.
\newblock \doi{10.1016/j.ejor.2020.07.063}.
\newblock URL \url{https://www.sciencedirect.com/science/article/pii/S0377221720306895}.

\bibitem[Bergman et~al.(2016)Bergman, Cire, Van~Hoeve, and Hooker]{bergman2016decision}
David Bergman, Andre~A Cire, Willem-Jan Van~Hoeve, and John Hooker.
\newblock \emph{Decision diagrams for optimization}, volume~1.
\newblock Springer, 2016.

\bibitem[Berto et~al.(2023)Berto, Hua, Park, Kim, Kim, Son, Kim, Kim, and Park]{berto_rl4co_2023}
Federico Berto, Chuanbo Hua, Junyoung Park, Minsu Kim, Hyeonah Kim, Jiwoo Son, Haeyeon Kim, Joungho Kim, and Jinkyoo Park.
\newblock {RL4CO}: an {Extensive} {Reinforcement} {Learning} for {Combinatorial} {Optimization} {Benchmark}, June 2023.
\newblock URL \url{http://arxiv.org/abs/2306.17100}.
\newblock arXiv:2306.17100 [cs].

\bibitem[Bestuzheva et~al.(2021)Bestuzheva, Besançon, Chen, Chmiela, Donkiewicz, Doornmalen, Eifler, Gaul, Gamrath, Gleixner, Gottwald, Graczyk, Halbig, Hoen, Hojny, Hulst, Koch, Lübbecke, Maher, Matter, Mühmer, Müller, Pfetsch, Rehfeldt, Schlein, Schlösser, Serrano, Shinano, Sofranac, Turner, Vigerske, Wegscheider, Wellner, Weninger, and Witzig]{bestuzheva_scip_2021}
Ksenia Bestuzheva, Mathieu Besançon, Wei-Kun Chen, Antonia Chmiela, Tim Donkiewicz, Jasper~van Doornmalen, Leon Eifler, Oliver Gaul, Gerald Gamrath, Ambros Gleixner, Leona Gottwald, Christoph Graczyk, Katrin Halbig, Alexander Hoen, Christopher Hojny, Rolf van~der Hulst, Thorsten Koch, Marco Lübbecke, Stephen~J. Maher, Frederic Matter, Erik Mühmer, Benjamin Müller, Marc~E. Pfetsch, Daniel Rehfeldt, Steffan Schlein, Franziska Schlösser, Felipe Serrano, Yuji Shinano, Boro Sofranac, Mark Turner, Stefan Vigerske, Fabian Wegscheider, Philipp Wellner, Dieter Weninger, and Jakob Witzig.
\newblock The {SCIP} {Optimization} {Suite} 8.0.
\newblock Technical {Report}, Optimization Online, December 2021.
\newblock URL \url{http://www.optimization-online.org/DB_HTML/2021/12/8728.html}.

\bibitem[Bixby(2012)]{bixby_brief_2012}
Robert~E. Bixby.
\newblock A brief history of linear and mixed-integer programming computation.
\newblock \emph{Documenta Mathematica}, 2012:\penalty0 107--121, 2012.

\bibitem[Chalumeau et~al.(2023)Chalumeau, Surana, Bonnet, Grinsztajn, Pretorius, Laterre, and Barrett]{chalumeau2023combinatorial}
Felix Chalumeau, Shikha Surana, Cl{\'e}ment Bonnet, Nathan Grinsztajn, Arnu Pretorius, Alexandre Laterre, and Tom Barrett.
\newblock Combinatorial optimization with policy adaptation using latent space search.
\newblock \emph{Advances in Neural Information Processing Systems}, 36:\penalty0 7947--7959, 2023.

\bibitem[Dabney et~al.(2018)Dabney, Ostrovski, Silver, and Munos]{dabney2018implicit}
Will Dabney, Georg Ostrovski, David Silver, and R{\'e}mi Munos.
\newblock Implicit quantile networks for distributional reinforcement learning.
\newblock In \emph{International conference on machine learning}, pages 1096--1105. PMLR, 2018.

\bibitem[Etheve(2021)]{etheve_solving_2021}
Marc Etheve.
\newblock \emph{Solving repeated optimization problems by {Machine} {Learning}}.
\newblock phdthesis, HESAM Université, December 2021.
\newblock URL \url{https://theses.hal.science/tel-03675471}.

\bibitem[Etheve et~al.(2020)Etheve, Alès, Bissuel, Juan, and Kedad-Sidhoum]{etheve_reinforcement_2020}
Marc Etheve, Zacharie Alès, Côme Bissuel, Olivier Juan, and Safia Kedad-Sidhoum.
\newblock Reinforcement {Learning} for {Variable} {Selection} in a {Branch} and {Bound} {Algorithm}.
\newblock In Emmanuel Hebrard and Nysret Musliu, editors, \emph{Integration of {Constraint} {Programming}, {Artificial} {Intelligence}, and {Operations} {Research}}, Lecture {Notes} in {Computer} {Science}, pages 176--185, Cham, 2020. Springer International Publishing.
\newblock ISBN 978-3-030-58942-4.
\newblock \doi{10.1007/978-3-030-58942-4_12}.

\bibitem[Farebrother et~al.(2024)Farebrother, Orbay, Vuong, Taïga, Chebotar, Xiao, Irpan, Levine, Castro, Faust, Kumar, and Agarwal]{farebrother_stop_2024}
Jesse Farebrother, Jordi Orbay, Quan Vuong, Adrien~Ali Taïga, Yevgen Chebotar, Ted Xiao, Alex Irpan, Sergey Levine, Pablo~Samuel Castro, Aleksandra Faust, Aviral Kumar, and Rishabh Agarwal.
\newblock Stop {Regressing}: {Training} {Value} {Functions} via {Classification} for {Scalable} {Deep} {RL}, March 2024.
\newblock URL \url{http://arxiv.org/abs/2403.03950}.
\newblock arXiv:2403.03950 [cs, stat].

\bibitem[Fukunaga(2011)]{fukunaga2011branch}
Alex~S Fukunaga.
\newblock A branch-and-bound algorithm for hard multiple knapsack problems.
\newblock \emph{Annals of Operations Research}, 184\penalty0 (1):\penalty0 97--119, 2011.

\bibitem[Gamrath and Schubert(2018)]{gamrath2018measuring}
Gerald Gamrath and Christoph Schubert.
\newblock Measuring the impact of branching rules for mixed-integer programming.
\newblock In \emph{Operations Research Proceedings 2017: Selected Papers of the Annual International Conference of the German Operations Research Society (GOR), Freie Universi{\"a}t Berlin, Germany, September 6-8, 2017}, pages 165--170. Springer, 2018.

\bibitem[Gasse et~al.(2019)Gasse, Chetelat, Ferroni, Charlin, and Lodi]{gasse_exact_2019}
Maxime Gasse, Didier Chetelat, Nicola Ferroni, Laurent Charlin, and Andrea Lodi.
\newblock Exact {Combinatorial} {Optimization} with {Graph} {Convolutional} {Neural} {Networks}.
\newblock In \emph{Advances in {Neural} {Information} {Processing} {Systems}}, volume~32. Curran Associates, Inc., 2019.
\newblock URL \url{https://proceedings.neurips.cc/paper/2019/hash/d14c2267d848abeb81fd590f371d39bd-Abstract.html}.

\bibitem[Gleixner et~al.(2021)Gleixner, Hendel, Gamrath, Achterberg, Bastubbe, Berthold, Christophel, Jarck, Koch, and Linderoth]{miplib_2021}
Ambros Gleixner, Gregor Hendel, Gerald Gamrath, Tobias Achterberg, Michael Bastubbe, Timo Berthold, Philipp Christophel, Kati Jarck, Thorsten Koch, and Jeff Linderoth.
\newblock {MIPLIB} 2017: data-driven compilation of the 6th mixed-integer programming library.
\newblock \emph{Mathematical Programming Computation}, 13\penalty0 (3):\penalty0 443--490, 2021.
\newblock Publisher: Springer.

\bibitem[Grill et~al.(2020)Grill, Altché, Tang, Hubert, Valko, Antonoglou, and Munos]{grill_monte-carlo_2020}
Jean-Bastien Grill, Florent Altché, Yunhao Tang, Thomas Hubert, Michal Valko, Ioannis Antonoglou, and Rémi Munos.
\newblock Monte-{Carlo} {Tree} {Search} as {Regularized} {Policy} {Optimization}, July 2020.
\newblock URL \url{http://arxiv.org/abs/2007.12509}.
\newblock arXiv:2007.12509 [cs, stat].

\bibitem[Grinsztajn et~al.(2022)Grinsztajn, Furelos-Blanco, and Barrett]{grinsztajn_population-based_2022}
Nathan Grinsztajn, Daniel Furelos-Blanco, and Thomas~D. Barrett.
\newblock Population-{Based} {Reinforcement} {Learning} for {Combinatorial} {Optimization}, October 2022.
\newblock URL \url{http://arxiv.org/abs/2210.03475}.
\newblock arXiv:2210.03475 [cs].

\bibitem[Gusfield(2019)]{gusfield2019integer}
Dan Gusfield.
\newblock \emph{Integer linear programming in computational and systems biology: an entry-level text and course}.
\newblock Cambridge University Press, 2019.

\bibitem[He et~al.(2014)He, Daume~III, and Eisner]{he_learning_2014}
He~He, Hal Daume~III, and Jason~M Eisner.
\newblock Learning to {Search} in {Branch} and {Bound} {Algorithms}.
\newblock In \emph{Advances in {Neural} {Information} {Processing} {Systems}}, volume~27. Curran Associates, Inc., 2014.
\newblock URL \url{https://papers.nips.cc/paper/2014/hash/757f843a169cc678064d9530d12a1881-Abstract.html}.

\bibitem[Hessel et~al.(2017)Hessel, Modayil, van Hasselt, Schaul, Ostrovski, Dabney, Horgan, Piot, Azar, and Silver]{hessel_rainbow_2017}
Matteo Hessel, Joseph Modayil, Hado van Hasselt, Tom Schaul, Georg Ostrovski, Will Dabney, Dan Horgan, Bilal Piot, Mohammad Azar, and David Silver.
\newblock Rainbow: {Combining} {Improvements} in {Deep} {Reinforcement} {Learning}, October 2017.
\newblock URL \url{http://arxiv.org/abs/1710.02298}.
\newblock arXiv:1710.02298 [cs].

\bibitem[Hillier and Lieberman(2015)]{hillier2015introduction}
Frederick~S Hillier and Gerald~J Lieberman.
\newblock \emph{Introduction to operations research}.
\newblock McGraw-Hill, 2015.

\bibitem[Imani and White(2018)]{imani_improving_2018}
Ehsan Imani and Martha White.
\newblock Improving {Regression} {Performance} with {Distributional} {Losses}.
\newblock In \emph{Proceedings of the 35th {International} {Conference} on {Machine} {Learning}}, pages 2157--2166. PMLR, July 2018.
\newblock URL \url{https://proceedings.mlr.press/v80/imani18a.html}.
\newblock ISSN: 2640-3498.

\bibitem[Kool et~al.(2019)Kool, van Hoof, and Welling]{kool_attention_2019}
Wouter Kool, Herke van Hoof, and Max Welling.
\newblock Attention, {Learn} to {Solve} {Routing} {Problems}!, February 2019.
\newblock URL \url{http://arxiv.org/abs/1803.08475}.
\newblock arXiv:1803.08475 [cs, stat].

\bibitem[Land and Doig(1960)]{land1960automatic}
AH~Land and AG~Doig.
\newblock An automatic method of solving discrete programming problems.
\newblock \emph{Econometrica}, 28\penalty0 (3):\penalty0 497--520, 1960.

\bibitem[Leyton-Brown et~al.(2000)Leyton-Brown, Pearson, and Shoham]{leyton2000towards}
Kevin Leyton-Brown, Mark Pearson, and Yoav Shoham.
\newblock Towards a universal test suite for combinatorial auction algorithms.
\newblock In \emph{Proceedings of the 2nd ACM conference on Electronic commerce}, pages 66--76, 2000.

\bibitem[Lin et~al.(2022)Lin, Zhu, Wang, and Zhang]{lin_learning_2022}
Jiacheng Lin, Jialin Zhu, Huangang Wang, and Tao Zhang.
\newblock Learning to branch with {Tree}-aware {Branching} {Transformers}.
\newblock \emph{Knowledge-Based Systems}, 252:\penalty0 109455, September 2022.
\newblock ISSN 0950-7051.
\newblock \doi{10.1016/j.knosys.2022.109455}.
\newblock URL \url{https://www.sciencedirect.com/science/article/pii/S0950705122007298}.

\bibitem[Mansini et~al.(2015)Mansini, Ogryczak, Speranza, and of~European Operational Research~Societies]{mansini2015linear}
Renata Mansini, Wodzimierz Ogryczak, M~Grazia Speranza, and EURO: The~Association of~European Operational Research~Societies.
\newblock \emph{Linear and mixed integer programming for portfolio optimization}, volume~21.
\newblock Springer, 2015.

\bibitem[Nair et~al.(2021)Nair, Bartunov, Gimeno, von Glehn, Lichocki, Lobov, O'Donoghue, Sonnerat, Tjandraatmadja, Wang, Addanki, Hapuarachchi, Keck, Keeling, Kohli, Ktena, Li, Vinyals, and Zwols]{nair_solving_2021}
Vinod Nair, Sergey Bartunov, Felix Gimeno, Ingrid von Glehn, Pawel Lichocki, Ivan Lobov, Brendan O'Donoghue, Nicolas Sonnerat, Christian Tjandraatmadja, Pengming Wang, Ravichandra Addanki, Tharindi Hapuarachchi, Thomas Keck, James Keeling, Pushmeet Kohli, Ira Ktena, Yujia Li, Oriol Vinyals, and Yori Zwols.
\newblock Solving {Mixed} {Integer} {Programs} {Using} {Neural} {Networks}, July 2021.
\newblock URL \url{http://arxiv.org/abs/2012.13349}.
\newblock arXiv:2012.13349 [cs, math].

\bibitem[Parsonson et~al.(2022)Parsonson, Laterre, and Barrett]{parsonson_reinforcement_2022}
Christopher W.~F. Parsonson, Alexandre Laterre, and Thomas~D. Barrett.
\newblock Reinforcement {Learning} for {Branch}-and-{Bound} {Optimisation} using {Retrospective} {Trajectories}, December 2022.
\newblock URL \url{http://arxiv.org/abs/2205.14345}.
\newblock arXiv:2205.14345 [cs].

\bibitem[Paulus et~al.(2022)Paulus, Zarpellon, Krause, Charlin, and Maddison]{paulus_learning_2022}
Max~B. Paulus, Giulia Zarpellon, Andreas Krause, Laurent Charlin, and Chris Maddison.
\newblock Learning to {Cut} by {Looking} {Ahead}: {Cutting} {Plane} {Selection} via {Imitation} {Learning}.
\newblock In \emph{Proceedings of the 39th {International} {Conference} on {Machine} {Learning}}, pages 17584--17600. PMLR, June 2022.
\newblock URL \url{https://proceedings.mlr.press/v162/paulus22a.html}.
\newblock ISSN: 2640-3498.

\bibitem[Pignatelli et~al.(2023)Pignatelli, Ferret, Geist, Mesnard, van Hasselt, and Toni]{pignatelli2023survey}
Eduardo Pignatelli, Johan Ferret, Matthieu Geist, Thomas Mesnard, Hado van Hasselt, and Laura Toni.
\newblock A survey of temporal credit assignment in deep reinforcement learning.
\newblock \emph{arXiv preprint arXiv:2312.01072}, 2023.

\bibitem[Prouvost et~al.(2020)Prouvost, Dumouchelle, Scavuzzo, Gasse, Chételat, and Lodi]{prouvost_ecole_2020}
Antoine Prouvost, Justin Dumouchelle, Lara Scavuzzo, Maxime Gasse, Didier Chételat, and Andrea Lodi.
\newblock Ecole: {A} {Gym}-like {Library} for {Machine} {Learning} in {Combinatorial} {Optimization} {Solvers}, November 2020.
\newblock URL \url{http://arxiv.org/abs/2011.06069}.
\newblock arXiv:2011.06069 [cs, math].

\bibitem[Puterman(2014)]{puterman2014markov}
Martin~L Puterman.
\newblock \emph{Markov decision processes: discrete stochastic dynamic programming}.
\newblock John Wiley \& Sons, 2014.

\bibitem[Scavuzzo et~al.(2022)Scavuzzo, Chen, Chételat, Gasse, Lodi, Yorke-Smith, and Aardal]{scavuzzo_learning_2022}
Lara Scavuzzo, Feng~Yang Chen, Didier Chételat, Maxime Gasse, Andrea Lodi, Neil Yorke-Smith, and Karen Aardal.
\newblock Learning to branch with {Tree} {MDPs}, October 2022.
\newblock URL \url{http://arxiv.org/abs/2205.11107}.
\newblock arXiv:2205.11107 [cs, math].

\bibitem[Scavuzzo et~al.(2024)Scavuzzo, Aardal, Lodi, and Yorke-Smith]{scavuzzo_machine_2024}
Lara Scavuzzo, Karen Aardal, Andrea Lodi, and Neil Yorke-Smith.
\newblock Machine {Learning} {Augmented} {Branch} and {Bound} for {Mixed} {Integer} {Linear} {Programming}, February 2024.
\newblock URL \url{http://arxiv.org/abs/2402.05501}.
\newblock arXiv:2402.05501 [cs, math].

\bibitem[Schaul(2015)]{schaul2015prioritized}
Tom Schaul.
\newblock Prioritized experience replay.
\newblock \emph{arXiv preprint arXiv:1511.05952}, 2015.

\bibitem[Seyfi et~al.(2023)Seyfi, Banitalebi-Dehkordi, Zhou, and Zhang]{seyfi_exact_2023}
Mehdi Seyfi, Amin Banitalebi-Dehkordi, Zirui Zhou, and Yong Zhang.
\newblock Exact {Combinatorial} {Optimization} with {Temporo}-{Attentional} {Graph} {Neural} {Networks}, November 2023.
\newblock URL \url{http://arxiv.org/abs/2311.13843}.
\newblock arXiv:2311.13843 [cs].

\bibitem[Silver et~al.(2018)Silver, Hubert, Schrittwieser, Antonoglou, Lai, Guez, Lanctot, Sifre, Kumaran, Graepel, Lillicrap, Simonyan, and Hassabis]{silver_general_2018}
David Silver, Thomas Hubert, Julian Schrittwieser, Ioannis Antonoglou, Matthew Lai, Arthur Guez, Marc Lanctot, Laurent Sifre, Dharshan Kumaran, Thore Graepel, Timothy Lillicrap, Karen Simonyan, and Demis Hassabis.
\newblock A general reinforcement learning algorithm that masters chess, shogi, and {Go} through self-play.
\newblock \emph{Science}, 362\penalty0 (6419):\penalty0 1140--1144, December 2018.
\newblock ISSN 0036-8075, 1095-9203.
\newblock \doi{10.1126/science.aar6404}.
\newblock URL \url{https://www.science.org/doi/10.1126/science.aar6404}.

\bibitem[Song et~al.(2020)Song, Lanka, Yue, and Dilkina]{song_general_2020}
Jialin Song, Ravi Lanka, Yisong Yue, and Bistra Dilkina.
\newblock A {General} {Large} {Neighborhood} {Search} {Framework} for {Solving} {Integer} {Linear} {Programs}, December 2020.
\newblock URL \url{http://arxiv.org/abs/2004.00422}.
\newblock arXiv:2004.00422 [cs, math, stat].

\bibitem[Sonnerat et~al.(2022)Sonnerat, Wang, Ktena, Bartunov, and Nair]{sonnerat_learning_2022}
Nicolas Sonnerat, Pengming Wang, Ira Ktena, Sergey Bartunov, and Vinod Nair.
\newblock Learning a {Large} {Neighborhood} {Search} {Algorithm} for {Mixed} {Integer} {Programs}, May 2022.
\newblock URL \url{http://arxiv.org/abs/2107.10201}.
\newblock arXiv:2107.10201 [cs, math].

\bibitem[Sun et~al.(2022)Sun, Chen, Li, and Song]{sun_improving_2022}
Haoran Sun, Wenbo Chen, Hui Li, and Le~Song.
\newblock Improving {Learning} to {Branch} via {Reinforcement} {Learning}.
\newblock July 2022.
\newblock URL \url{https://openreview.net/forum?id=z4D7-PTxTb}.

\bibitem[Tang et~al.(2020)Tang, Agrawal, and Faenza]{tang_reinforcement_2020}
Yunhao Tang, Shipra Agrawal, and Yuri Faenza.
\newblock Reinforcement {Learning} for {Integer} {Programming}: {Learning} to {Cut}.
\newblock In \emph{Proceedings of the 37th {International} {Conference} on {Machine} {Learning}}, pages 9367--9376. PMLR, November 2020.
\newblock URL \url{https://proceedings.mlr.press/v119/tang20a.html}.
\newblock ISSN: 2640-3498.

\bibitem[Tjeng et~al.(2019)Tjeng, Xiao, and Tedrake]{tjengevaluating}
Vincent Tjeng, Kai~Y Xiao, and Russ Tedrake.
\newblock Evaluating robustness of neural networks with mixed integer programming.
\newblock In \emph{International Conference on Learning Representations}, 2019.

\bibitem[Van~Hasselt et~al.(2016)Van~Hasselt, Guez, and Silver]{van2016deep}
Hado Van~Hasselt, Arthur Guez, and David Silver.
\newblock Deep reinforcement learning with double q-learning.
\newblock In \emph{Proceedings of the AAAI conference on artificial intelligence}, volume~30, 2016.

\bibitem[Wang et~al.(2023)Wang, Li, Wang, Kuang, Yuan, Zeng, Zhang, and Wu]{wang_learning_2023}
Zhihai Wang, Xijun Li, Jie Wang, Yufei Kuang, Mingxuan Yuan, Jia Zeng, Yongdong Zhang, and Feng Wu.
\newblock Learning {Cut} {Selection} for {Mixed}-{Integer} {Linear} {Programming} via {Hierarchical} {Sequence} {Model}.
\newblock February 2023.
\newblock URL \url{https://openreview.net/forum?id=Zob4P9bRNcK}.

\bibitem[Wu and Lisser(2023)]{wu_deep_2023}
Dawen Wu and Abdel Lisser.
\newblock A deep learning approach for solving linear programming problems.
\newblock \emph{Neurocomputing}, 520:\penalty0 15--24, February 2023.
\newblock ISSN 09252312.
\newblock \doi{10.1016/j.neucom.2022.11.053}.
\newblock URL \url{https://linkinghub.elsevier.com/retrieve/pii/S0925231222014412}.

\end{thebibliography}
\bibliographystyle{plainnat}


\newpage
\appendix

\section{BBMDP vs TreeMDP}
\subsection*{Side-by-side comparison}
\label{app:BBvsTree}
TreeMDP offers an intuitive and computationaly efficient framework for training RL agents to learn enhanced variable selection strategies in branch and bound. However, by discarding core MDP concepts such as temporality and sequentiality, it fails to accommodate the diversity of RL algorithms.
For example, defining the $k$-step return temporal difference target in TreeMDP is challenging. Since branching on action $a_i$ at state $s_i$ produces two child next states $s_i^-$ and $s_i^+$, applying recursively the tree Bellman operator from \citet{etheve_solving_2021} to $V^\pi$ yields $V^\pi(s_i) = -\sum_{j=1}^{k} 2^j + \sum_{j=1}^{2^k} V^\pi(s^j_i)$.
The corresponding TreeMDP trajectory is depicted in Figure \ref{fig:tree vs clean} (d) for $k=3$.
More generally, it is important to note that such trajectories cannot be obtained when solving a MILP using B\&B with a depth-first search node selection policy, since the subtree rooted in $s_i^-$ must be fully fathomed before the node $s_i^+$ can be considered for expansion. Therefore, the TreeMDP $k$-step return TD target defined above stems from approximations in the B\&B dynamics, which undermines the best performance achievable by $k$-step tMDP-DQN, as demonstrated in the ablation study in Section \ref{sec:ablation}.
Table \ref{tab:BBvsTree} further highlights the key differences between BBMDP and TreeMDP.

\begin{table}[h]
    \caption{Side by side comparison of BBMDP and TreeMDP frameworks, complementing Figure \ref{fig:tree vs clean}.}
    \centering
    \small
    \renewcommand{\arraystretch}{2} 
    \begin{tabular}{ccc} 
        & BBMDP & TreeMDP \\
        \toprule
        MDP & Yes & No \\
        
        State $s$ & $s_t = (\mathcal{V}_t, \mathcal{E}_t, \bar{x}_t)$ & $s_i = (P_{i}, x^{*}_{LP, i}, \bar{x}_{i})$ \\
        
        Action  $a$ & $a_t \in \mathcal{I}$ & $a_i \in \mathcal{I}$ \\
        
        Reward $r$ & -2 & -1 \\
        
        Next state $s'$ & $s_{t+1} = (\mathcal{V}_{t+1}, \mathcal{E}_{t+1}, \bar{x}_{t+1})$ & $s_i^{-}, s_i^+$ \\
        
        $k$-step next state $s^{(k)}$ & $s_{t+k} = (\mathcal{V}_{t+k}, \mathcal{E}_{t+k}, \bar{x}_{t+k})$ & $s_i^1$, $s_i^2$, ..., $s_i^{2^k}$\\
        
        $\mathcal{B}(V^\pi) = V^\pi$ & \scriptsize{$\bar{V}^{\pi}(o_1, \bar{x}_{o_1}) = -2 + \bar{V}^{\pi}(o'_1, \bar{x}_{o'_1}) + \bar{V}^{\pi}(o'_2, \bar{x}_{o'_2})$} & \scriptsize{$V^\pi(s_i) = -2 + V^\pi(s^-_i) + V^\pi(s^+_i)$} \\
        
        $\mathcal{B}^k(V^\pi)= V^\pi$ & \scriptsize{$\bar{V}^\pi(o_1, \bar{x}_{o_1}) = -2k + \sum_{i=1}^{k+1} \bar{V}^\pi(o^{(k)}_i, \bar{x}_{o^{(k)}_i})$} & \scriptsize{$ V^\pi(s_i) = -\sum_{j=1}^{k} 2^j + \sum_{j=1}^{2^k} V^\pi(s^j_i)$}\\
        \bottomrule
    \end{tabular}

    \label{tab:BBvsTree}
\end{table}

\subsection*{Learning to branch with MCTS}

MCTS-based RL algorithms have achieved remarkable performance in combinatorial settings, particularly in board games \cite{silver_general_2018}. Interstingly, by providing a rigorous MDP formulation for variable selection in B\&B, BBMDP enables the adapation of MCTS-based learning algorithms to the B\&B framework, overcoming the incompatibilities that persisted within TreeMDP. In fact, when searching for the next node to expand, MCTS traverses the tree following a UCT exploration criterion: 

\begin{equation}
    a^k = \arg \max_{a \in \mathcal{A}} \left[Q(s,a) + \pi(s, a) \cdot\frac{\sqrt{\sum_{b\in \mathcal{A}}{N(s,b)}}}{1+N(s,a)}\left(c_1+\log(\frac{\sum_{b\in \mathcal{A}} N(s,b)}{c_2}) \right)\right]
\end{equation}

where $N(s,a)$ corresponds to the number of visits of the node $(s,a)$, and $c_1, c_2$ are search hyperparameters. This formulation balances exploration and exploitation, guiding the search toward promising states while ensuring sufficient exploration of the tree. However in TreeMDP, such an exploration criteria leads to inconsistencies, as the exploration between node $s_i^{-}$ and $s_i^{+}$ is balanced based on their estimated value. Since both $s_i^{-}$ and $s_i^{+}$ originate from the same branching decision $a_i$, differentiating between them for expansion is irrelevant. In B\&B, branching on $a_i$ implies that $s_i^-$ and $s_i^+$ are both necessarily added to the B\&B tree. Consequently, in TreeMDP, MCTS could guide the search towards a suboptimal action $a_i$ based on the assumption that it produces a state $s_i^{-}$ of small subtree size, while failing to account for the potentially enormous subtree size of $s_i^{+}$. This fundamental inconsistency hinders the effective application of MCTS within the TreeMDP framework.

\section{HL-Gauss Loss}
\label{app:loss}
As they investigated the uneven success met by complex neural network architectures such as Transformers in supervised versus reinforcement learning, \citet{farebrother_stop_2024} found that training agents using a cross-entropy classification objective significantly improved the performance and scalability of value-based RL methods. However, replacing mean squared error regression with cross-entropy classification requires methods to transform scalars into distributions and distributions into scalars. \citet{farebrother_stop_2024} found the Histogram Gaussian loss (HL-Gauss) \citep{imani_improving_2018}, which exploits the ordinal structure of the regression task by distributing probability mass on multiple neighboring histogram bins, to be a reliable solution across multiple RL benchmarks. Concretely, in HL-Gauss, the support of the value function $\mathcal{Z}_v \subset \mathbb{R}$ is divided in $m_b$ bins of equal width forming a partition of $\mathcal{Z}_v$. Bins are centered at $\zeta_i \in \mathcal{Z}_v$ for ${1 \leq i \leq m_b}$, we use $\eta = (\zeta_{max}-\zeta_{min})/m_b$ to denote their width. Given a scalar $z \in \mathcal{Z}_v$, we define the random variable $Y_z \sim \mathcal{N}(\mu=z, \sigma^2)$ and note respectively $\phi_{Y_z}$ and $\Phi_{Y_z}$ its associate probability density and cumulative distribution function. $z$ can then be encoded into a histogram distribution on $\mathcal{Z}_v$ using the function $p_{hist}: \mathbb{R} \rightarrow [0, 1]^{m_b}$. Explicitly, $p_{hist}$ computes the aggregated mass of $\phi_{Y_z}$ on each bin:

\[p_{hist}(z) = (p_i(z))_{1\leq i \leq m_b} \text{ with } p_i(z) = \int_{\zeta_i-\frac{\eta}{2}}^{\zeta_i+\frac{\eta}{2}} \phi_{Y_z}(y) dy = \Phi_{Y_z}(\zeta_i+\frac{\eta}{2}) - \Phi_{Y_z}(\zeta_i-\frac{\eta}{2}) \]

Conversely, histogram distributions $(p_i)_{1 \leq i \leq m_b}$ such as the ones outputted by agents' value networks can be converted to scalar simply by computing the expectation: $z = \sum_{i=1}^{m_b} p_i \cdot \zeta_i$. \\
\newline BBMDP is a challenging setting to adapt HL-Gauss, as the support for value functions spans over several order of magnitude. In practice, we observe that for train instances of the Ecole benchmark, $\mathcal{Z}_v = [-10^6, -2]$. Since value functions predict the number of node of binary trees built with B\&B, it seems natural to choose bins centered at $\zeta_i = -2^{i}$ to partition $\mathcal{Z}_v$. In order to preserve bins of equal size, we consider distributions on the support $\psi(\mathcal{Z}_v)$ with $\psi(z) = \log_2(-z)$ for $z \in \mathcal{Z}_v$, such that $\psi(\mathcal{Z}_v)$ is efficiently partitioned by bins centered at $\zeta_i=i$ for ${1 \leq i \leq m_b}$.  Thus, in BBMDP histograms distributions are given by $p_{hist}(z) = (p_i \circ \psi(z))_{1 \leq i \leq m_b}$ for $z \in \mathcal{Z}_v$, and can be converted back to $\mathcal{Z}_v$ through $z = \sum_{i=1}^{m_b} p_i \cdot \psi^{-1}(\zeta_i)$ with $\psi^{-1}(z) = -2^{z}$.

\section{Baselines}
\label{app:baseline}

\paragraph{Imitation learning} We trained and tested IL agents using the official Ecole re-implementation of \citet{gasse_exact_2019} shared at \url{https://github.com/ds4dm/learn2branch-ecole/tree/main}.

\paragraph{DQN-TreeMDP} Since there is no publicly available implementation of \citet{etheve_reinforcement_2020}, we re-implemented DQN-TreeMDP and trained it on the four Ecole benchmarks, using when applicable the same network architectures and training parameters as in DQN-BBMDP and DQN-Retro. We share implementation and trained network weights to the community.

\paragraph{PG-tMDP} We used the official implementation of \citet{scavuzzo_learning_2022} to evaluate PG-TreeMDP. For each benchmark, we used the tMDP+DFS network weights shared at \url{https://github.com/lascavana/rl2branch}.

\paragraph{DQN-Retro} As \citet{parsonson_reinforcement_2022} only trained on set covering instances, we took inspiration from the official implementation shared at \url{https://github.com/cwfparsonson/retro_branching} to train and evaluate DQN-Retro agents on the four Ecole benchmarks. Importantly, we trained and tested DQN-Retro following SCIP's default node selection strategy, see Appendix \ref{app:parsonson} for more details. We share our re-implementation and trained network weights with the community.

Importantly, on the multiple knapsack benchmark, given the high computational cost associated with opting for a depth-first search node selection policy, all baselines excepted IL-DFS are evaluated following SCIP default node selection policy.

\section{\greent{BBMDP vs Retro branching}}
\label{app:parsonson}
\greent{In their work, \citet{parsonson_reinforcement_2022} proposed to train RL agents on retrospective trajectories built from TreeMDP episodes, in order to leverage the state-of-the art node selection policies implemented in MILP solvers}. When reproducing their work, we found several discrepancies with the results they stated. First, the  performance gap between DQN-Retro and DQN-TreeMDP \citep{etheve_reinforcement_2020} turned out to be much narrower than expected. On test set covering instances, the only benchmark on which the two agents are compared in \citet{parsonson_reinforcement_2022}, we even found DQN-TreeMDP to perform better. Second, \citet{parsonson_reinforcement_2022} found that adopting a best-first-search (BeFS) node selection strategy at evaluation time greatly improved the performance of DQN-TreeMDP on test set covering instances, \greent{indicating that abandoning DFS during training could be beneficial. However,} in our experiments, we observed a 20\% performance drop when replacing DFS by BeFS at evaluation time. After thorough examination of both \citet{parsonson_reinforcement_2022}'s article and implementation, we found that the baseline labeled as DQN-TreeMDP (FMSTS-DFS in their article) was quite distant from the branching agent originally described in \citep{etheve_reinforcement_2020}. In fact, in \citet{parsonson_reinforcement_2022}, the \citet{etheve_reinforcement_2020} branching agent is not trained on TreeMDP trajectories, but on retrospective trajectories built from TreeMDP episodes, using a DFS construction heuristic. Therefore, \citet{parsonson_reinforcement_2022} could not conclude on the superiority of retro branching over TreeMDP, \greent{nor could they assess the limitations of DFS-based RL agents. In contrast, our contribution provides compelling evidence that, while DFS is generally expected to hinder the training performance of RL agents due to its reputation as a suboptimal node selection policy, the theoretical guarantees brought by DFS in BBMDP enable to surpass prior state-of-the-art non-DFS agents. We believe this is because optimizing the node selection policy has less influence on tree size performance compared to optimizing the variable selection policy, as evidenced by \citet{etheve_solving_2021}}.

\section{Neural network}
\label{app:features}

\paragraph{State representation}
Following the works of \citet{gasse_exact_2019}, MILPs are best represented by bipartite graphs $\mathcal{G} = (\mathcal{V}_{\mathcal{G}}, 
\mathcal{C}_{\mathcal{G}}, \mathcal{E}_{\mathcal{G}})$ where 
$\mathcal{V}_{\mathcal{G}}$ denotes the set of variable nodes, 
$\mathcal{C}_{\mathcal{G}}$ denotes the set of constraint nodes, and 
$\mathcal{E}_{\mathcal{G}}$ denotes the set of edges linking variable and 
constraints nodes.
Nodes $v_{\mathcal{G}} \in \mathcal{V}_{\mathcal{G}}$ and $c_{\mathcal{G}} \in \mathcal{C}_{\mathcal{G}}$ are connected if the variable associated with $v_{\mathcal{G}}$ appears in the constraint associated with $c_{\mathcal{G}}$.
Given a MILP $P$, defined as in Section \ref{sec:b&b}, its associate bipartite representation $\mathcal{G}$ has $|\mathcal{G}| = |\mathcal{V}_{\mathcal{G}}| + |\mathcal{C}_{\mathcal{G}}| = n + m$ nodes. We use bipartite graphs to encode the information contained in $(o_i, \bar{x}_{o_i})$, as defined in section \ref{sec:core}.
In our experiments, IL and PG-tMDP agents use the list of features of \citet{gasse_exact_2019} to represent variable nodes, constraint nodes and edges, while DQN-BBMDP, DQN-TreeMDP and DQN-Retro agents also make use of the additional features introduced by \citet{parsonson_reinforcement_2022}.

\paragraph{Network architecture} All RL agents utilize the graph convolutional network architecture described in \citet{scavuzzo_learning_2022} and \citet{parsonson_reinforcement_2022}. In DQN-BBMDP, the architecture differs slightly, with the final layer outputting distribution vectors in $\mathbb{R}^{m_b}$ instead of scalar values in $\mathbb{R}$.

\section{Instance dataset}
\label{app:milps}
    Instance datasets used for training and evaluation are decribed in Table \ref{tab:milps}. We trained and tested on instances of same dimensions as \citet{scavuzzo_learning_2022} and \citet{parsonson_reinforcement_2022}. As a reminder, the size of action set  $\mathcal{A}$ is equal to the number of integer variables in $P$. Consequently, action set sizes in the Ecole benchmark range from 30 to 480 for train / test instances and from 50 to 980 for transfer instances. \\
    \begin{table}[h]
        \centering
        \caption{Instance size for each benchmark. Performance is evaluated on test instances that match the size of the training instances, as well as on larger instances, to further assess the generalization capacity of our agents. Last two columns indicate the approximate number of integer variables after presolve, both for train / test and transfer instances.}
        \small
        \begin{tabular}{ccccccc} 
            & & & \multicolumn{2}{c}{Parameter value} & \multicolumn{2}{c}{\# Int. variables} \\
            Benchmark & Generation method & Parameters & Train / Test  & Transfer & Train / Test & Transfer \\ 
            \toprule
            \begin{tabular}{@{}c@{}}Combinatorial \\ auction \end{tabular} & \scriptsize{\citet{leyton2000towards}} & \begin{tabular}{@{}c@{}}Items \\ Bids\end{tabular} & \begin{tabular}{@{}c@{}}100 \\ 500\end{tabular} & \begin{tabular}{@{}c@{}}200 \\ 1000\end{tabular} & $ 100 $ & $200$\\
            \midrule
            Set covering & \scriptsize{\citet{balas1980set}} &\begin{tabular}{@{}c@{}}Items \\ Sets\end{tabular}& \begin{tabular}{@{}c@{}}500 \\ 1000\end{tabular} & \begin{tabular}{@{}c@{}}1000 \\ 1000\end{tabular} & $ 100$ & $ 130$\\
            \midrule
            \begin{tabular}{@{}c@{}}Maximum \\ independent set\end{tabular}  & \scriptsize{\citet{bergman2016decision}} & Nodes & 500 & 1000 & $480$ & $ 980$ \\
            \midrule
            \begin{tabular}{@{}c@{}}Multiple \\ knapsack\end{tabular} & \scriptsize{\citet{fukunaga2011branch}} & \begin{tabular}{@{}c@{}}Items\\ Knapsacks\end{tabular} & \begin{tabular}{@{}c@{}}100 \\ 6\end{tabular} & \begin{tabular}{@{}c@{}}100 \\ 12\end{tabular} & $ 30 $ & $ 50$ \\
            \bottomrule
        \hspace{5mm}
        \end{tabular}
        \label{tab:milps}
    \end{table}

\section{\greent{Training pipeline}}
\label{app:rl}
\paragraph{DQN Implementation} In Algorithm \ref{alg:corrected_fmsts}, we provide a \greent{description of DQN-BBMDP training pipeline}. Our DQN implementation includes several Rainbow-DQN features \citep{hessel_rainbow_2017}: double DQN \citep{van2016deep}, $n$-step learning and prioritized experience replay (PER) \citet{schaul2015prioritized}. Moreover, as DQN-BBMDP learns distributions representing $Q$-values, it integrates elements of \citep{bellemare2017distributional}. \\

\begin{algorithm}
\caption{\greent{DQN-BBMDP}}\label{alg:corrected_fmsts}
    \begin{algorithmic}
    \FOR{$t = 0...N-1$}
        \STATE Draw randomly an instance $P \sim p_0$.
        \STATE Solve $P$ by acting following a combined $\epsilon$-greedy and Boltzman exploration according to $\mathbf{q}_{\theta_t}$.
        \STATE Collect transitions along the generated tree $(s_i, a_i, \sum_{j=1}^{k} r_{i+j}, s_{i+k})$ and store them into a replay buffer $\mathcal{B}_{replay}$.
        \STATE Update $\theta_t$ using the loss described in Sec~\ref{sec:bellman} on transition batches drawn from $\mathcal{B}_{replay}$.
    \ENDFOR
    \end{algorithmic}
\end{algorithm}

\paragraph{Exploration} We train our agents following Boltzmann and $\epsilon$-greedy exploration combined. Concretely, agents select actions uniformly from $\mathcal{A}$ with probability $\epsilon$, while following a Boltzmann exploration strategy with temperature $\tau$ for the remaining probability $1-\epsilon$. \greent{The decay rates for $\epsilon$ and $\tau$ are listed in Table \ref{tab:hyperparams}.}

\greent{\paragraph{Reward model} In Section \ref{sec:def}, we defined $\mathcal{R}(s,a) =-2$ for all transition, so that the overall value of a trajectory matched the size of the B\&B tree. In practice, all negative constant reward model yield equivalent optimal policies in BBMDP, therefore, we chose to implement $\mathcal{R}(s,a) = -1$ for all RL baselines in order to allow clearer comparison between BBMDP and TreeMDP agents.}

\paragraph{Training parameters} Table \ref{tab:hyperparams} provides the list of hyperparameters used to train DQN agents on the Ecole benchmarks. To allow fair comparisons, when applicable, we keep SCIP parameters, training parameters and network architectures fixed for all DQN-agents.

\begin{table}[h]
    \small
    \centering
    \caption{Training parameters for all DQN branching agents. For DQN-Retro, we take $\gamma = 0.99$ as in \citet{parsonson_reinforcement_2022}.}
    \hspace{5mm}
    \begin{tabular}{ccc} 
        Module & Training parameter &  Value \\
        \toprule
        & Batch size & $128$ \\
        & Optimizer & Adam \\
        & $k$-step return & $3$ \\
       $Q$-learning & Learning rate $l_r$ & $5 \times 10^{-5}$ \\
        & Discount factor $\gamma$ & $1.0$ \\
        & Agent steps per network update & 10 \\
         & Soft target network update $\tau_{net}$ & $10^{-4}$ \\
        \midrule
        Replay buffer & Buffer minimum size $|\mathcal{B}_{replay}|_{init}$ & $20 \times 10^3$ \\
        & Buffer maximum capacity $|\mathcal{B}_{replay}|_{max}$ & $100 \times 10^3$ \\
        \midrule
        & PER $\alpha$ & $0.6$ \\
        & PER $\beta_{init}$ & $0.4$ \\
        Prioritized experience replay & PER $\beta_{final}$ & $1.0$\\
        & $\beta_{init} \rightarrow \beta_{final}$ learner steps & $100 \times 10^3$\\
        & Minimum experience priority & $10^{-3}$ \\
        \midrule
        & Start exploration probability $\epsilon_{init}$ & $1.0$ \\
        & Minimum exploration probability $\epsilon_{min}$ & $2.5 \times 10^{-2}$ \\
        Exploration & $\epsilon$-decay & $10^{-4}$ \\
        & Start temperature $\tau_{init}$ & $1.0$ \\
        & Minimum temperature $\tau_{min}$ & $10^{-3}$ \\
        & $\tau$-decay & $10^{-5}$ \\
        \midrule
        & $z_{min}$ & -1 \\
        HL-Gauss & $z_{max}$ & $16$ \\
        (only for DQN-BBMDP)& $m_b$ &$18$\\
        & $\sigma$ & $0.75$  \\
        \bottomrule
    \end{tabular}
\label{tab:hyperparams}
\end{table}

\section{Validation curves}
\label{app:training_curves}

All experiments were conducted on an NVIDIA DGX A100 system equipped with 8× A100 40GB GPUs, 2× AMD EPYC 7742 64-core CPUs (128 threads total), and 1 TB of DDR4 RAM. 
We present validation curves for DQN-BBMDP, DQN-TreeMDP and DQN-Retro in Figure \ref{fig:four_subfigures}. For each benchmark we trained for $200$k gradient steps, which took approximately $2$ days for combinatorial auction instances, $3$ days for set covering instances, $5$ days for multiple knapsack instances and $7$ days for maximum independent set instances. As shown in Figure \ref{fig:four_subfigures}, DQN-BBMDP training was interrupted before final convergence on 3 out of 4 benchmarks, hinting that performance could likely be improved by training for more steps. \\
\newline

\begin{figure}[htbp]
    \centering
    \includegraphics[width=\textwidth]{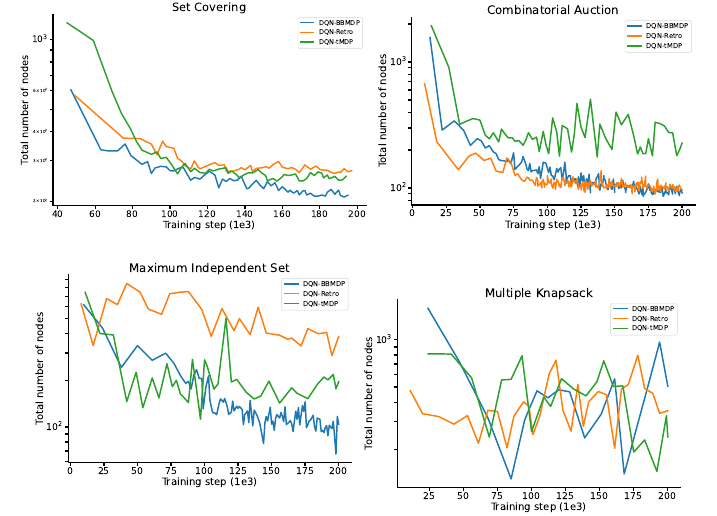}
    \caption{Validation curves for DQN-BBMDP, DQN-Retro and DQN-tMDP agents, in log scale. Throughout training, agents are evaluated on 20 validation instances after each batch of 100 training instances solved. Note that on the multiple knapsack benchmark, none of the agents reach convergence.}
    \label{fig:four_subfigures}
\end{figure}

\section{\greent{Additional computational results }}
In this section, we include further computational results on instances of the Ecole benchmark.

Table \ref{tab:win_results} provides additional performance metrics to compare the different baselines across test and transfer benchmarks. For each benchmark, we report the number of wins and the average rank of each baseline across 100 evaluation instances. The number of wins is defined as the number of instances where a baseline solves a MILP problem faster than any other baseline. When multiple baselines fail to solve an instance to optimality within the time limit, their performance is ranked based on final dual gap. 

Finally, Table \ref{tab:complete_results} recapitulates the computational results presented in Table \ref{tab:results}, and provides for each baseline the per-benchmark standard deviation over five seeds, as well as the fraction of test instances solved to optimality within the time limit. \\

    \begin{table}[]
        \centering
        \caption{Additional performance metrics for each baseline on train / test and transfer instance benchmarks, see Appendix \ref{app:milps} for instance details. For each benchmark, we report the number of wins, and the average rank of each baseline across the 100 evaluation instances. We also report for each baseline the fraction of test instances solved to optimality within time limit. The number of wins is defined as the number of instances where a baseline solves a MILP problem faster than all other baselines. When multiple baselines fail to solve an instance to optimality within time limit, their performance is ranked based on final dual gap. }
        \hspace{5mm}
        \small
        \begin{tabular}{ccccccc} 
            & \multicolumn{3}{c}{Train / Test} & \multicolumn{3}{c}{Transfer}\\
            Method & Solved & Wins & Rank & Solved &  Wins & Rank   \\  
            \toprule
            SCIP  & $100/100$ & $8/100$ & $5.7$ & $100/100$ & $10/100$ & $3.3$ \\
            \midrule
             IL & $100/100$ & $1/00$ & $3.8$ & $100/100$ & $29/100$ & $1.8$ \\
             IL-DFS & $100/100$ & $35/100$ & $2.1$ & $\textcolor{purple}{\mathbf{100/100}}$ & $\textcolor{purple}{\mathbf{58/100}}$ & $\textcolor{purple}{\mathbf{1.7}}$ \\
            \midrule
            PG-tMDP  & $100/100$ & $0/100$ & $6.6$ & $78/100$ & $0/100$ & $6.8$ \\
            DQN-tMDP  & $100/100$ & $11/100$ & $2.9$ & $96/100$ & $0/100$ & $5.0$ \\ 
            DQN-Retro & $100/100$ & $1/100$ & $4.9$ & $98/100$ & $0/100$ & $5.1$ \\
            DQN-BBMDP & $100/100$ & $\textcolor{purple}{\mathbf{44/100}}$ & $\textcolor{purple}{\mathbf{1.9}}$ & $\textcolor{blue}{\mathbf{100/100}}$ & $\textcolor{blue}{\mathbf{3/100}}$ & $\textcolor{blue}{\mathbf{4.2}}$ \\
            \bottomrule
            & \multicolumn{6}{c}{Set covering}
        \hspace{1.5cm}
        \end{tabular}
            \begin{tabular}{ccccccc} 
            & \multicolumn{3}{c}{Train / Test} & \multicolumn{3}{c}{Transfer}\\
            Method & Solved & Wins & Rank & Solved &  Wins & Rank  \\ 
            \toprule
            SCIP  & $100/100$ & $14/100$ & $4.1$ & $100/100$ & $14/100$ & $3.51$ \\
            \midrule
             IL & $100/100$ & $16/100$ & $3.5$ & $\textcolor{purple}{\mathbf{100/100}}$ & $\textcolor{purple}{\mathbf{47/100}}$ & $\textcolor{purple}{\mathbf{1.7}}$\\
             IL-DFS & $100/100$ & $31/100$ & $2.6$ & $100/100$ & $20/100$ & $2.5$  \\
            \midrule
            PG-tMDP  & $100/100$ & $0/100$ & $5.7$ & $100/100$ & $0/100$ & $5.8$\\
            DQN-tMDP  & $100/100$ & $0/100$ & $6.2$ & $100/100$ & $0/100$ & $6.5$ \\ 
            DQN-Retro & $100/100$ & $10/100$ & $3.6$ & $100/100$ & $1/100$ & $4.6$ \\
            DQN-BBMDP & $100/100$ & $\textcolor{purple}{\mathbf{34/100}}$ & $\textcolor{purple}{\mathbf{2.3}}$ &  $\textcolor{blue}{\mathbf{100/100}}$ & $\textcolor{blue}{\mathbf{18/100}}$ & $\textcolor{blue}{\mathbf{2.9}}$ \\
            \bottomrule
            & \multicolumn{6}{c}{Combinatorial Auction}
        \hspace{1.5cm}
        \end{tabular}
        \begin{tabular}{ccccccc} 
            & \multicolumn{3}{c}{Train / Test} & \multicolumn{3}{c}{Transfer}\\
            Method & Solved & Wins & Rank & Solved &  Wins & Rank \\  
            \toprule
            SCIP  & $100/100$ & $9/100$ & $5.7$ & $100/100$ & $7/100$ & $4.5$ \\
            \midrule
             IL & $100/100$ & $\textcolor{purple}{\mathbf{72/100}}$ & $\textcolor{purple}{\mathbf{1.6}}$ & $100/100$ & $\textcolor{purple}{\mathbf{57/100}}$ & $\textcolor{purple}{\mathbf{1.7}}$ \\
             IL-DFS & $100/100$ & $10/100$ & $2.4$ & $100/100$ & $0/100$ & $3.2$\\
            \midrule
            PG-tMDP  & $100/100$ & $0/100$ & $4.9$ & $\textcolor{purple}{\mathbf{100/100}}$ & $\textcolor{blue}{\mathbf{36/100}}$ & $\textcolor{blue}{\mathbf{1.8}}$ \\
            DQN-tMDP  & $100/100$ & $1/100$ & $4.8$ & $85/100$ & $0/100$ & $6.1$ \\ 
            DQN-Retro & $100/100$ & $\textcolor{blue}{\mathbf{6/100}}$ & $5.4$ & $22/100$ & $0/100$ & $6.7$\\
            DQN-BBMDP & $100/100$ & $2/100$ & $\textcolor{blue}{\mathbf{3.3}}$ & $95/100$ & $0/100$ & $4.2$\\
            \bottomrule
            & \multicolumn{6}{c}{Maximum Independent Set}
        \hspace{1.5cm}
        \end{tabular}
        \begin{tabular}{ccccccc} 
            & \multicolumn{3}{c}{Train / Test} & \multicolumn{3}{c}{Transfer}\\
            Method & Solved & Wins & Rank & Solved &  Wins & Rank   \\  
            \toprule
            SCIP  & $100/100$ & $\textcolor{purple}{\mathbf{88/100}}$ & $\textcolor{purple}{\mathbf{1.4}}$ & $\textcolor{purple}{\mathbf{100/100}}$ & $\textcolor{purple}{\mathbf{60/100}}$ & $\textcolor{purple}{\mathbf{1.9}}$ \\
            \midrule
             IL & $100/100$ & $1/100$ & $4.5$ & $100/100$ & $6/100$ & $3.4$\\
             IL-DFS & $100/100$ & $1/100$ & $5.8$ & $98/100$ & $0/100$ & $6.0$ \\
            \midrule
            PG-tMDP  & $100/100$ & $0/100$ & $6.0$ & $98/100$ & $5/100$ & $5.0$ \\
            DQN-tMDP  & $100/100$ & $1/100$ & $3.5$ & $99/100$ & $\textcolor{blue}{\mathbf{14/100}}$ & $\textcolor{blue}{\mathbf{3.5}}$ \\ 
            DQN-Retro & $100/100$ & $3/100$ & $3.5$ & $98/100$ & $9/100$ & $3.8$ \\
            DQN-BBMDP & $100/100$ & $\textcolor{blue}{\mathbf{6/100}}$ & $\textcolor{blue}{\mathbf{3.3}}$ & $\textcolor{blue}{\mathbf{100/100}}$ & $6/100$ & $4.3$ \\
            \bottomrule
            & \multicolumn{6}{c}{Multiple Knapsack}
        \hspace{1cm}
        \end{tabular}
        \label{tab:win_results}
    \end{table}

\label{app:results}
    \begin{table}[]
        \centering
        \caption{Computational performance comparison on four MILP benchmarks. Following prior works, we report geometrical mean over 100 instances, averaged over 5 seeds, as well as per-benchmark standard deviations.}
        \hspace{5mm}
        \small
        \begin{tabular}{ccccccc} 
            & \multicolumn{3}{c}{Train / Test} &  \multicolumn{3}{c}{Transfer}\\
            Method & Nodes & Time & Solved & Nodes & Time & Solved \\ 
            \toprule
            Random & $3289 \pm 4.2\%$ & $5.9 \pm 4.3\%$ & $100/100$ &  $270365 \pm 9.5\%$ & $811 \pm 7.9\%$ & $60/100$ \\
            SB  & $35.8 \pm 0.0\%$ & $12.93 \pm 0.0\%$ & $100/100$ &  $672.1 \pm 0.0\%$ & $398 \pm 0.2\%$ & $82/100$ \\
            SCIP  & $62.0 \pm 0.0\%$ & $2.27 \pm 0.0\%$ & $100/100$ & $3309 \pm 0.0\%$ & $48.4 \pm 0.1\%$ & $100/100$ \\
            \midrule
             IL & \textcolor{purple}{$\mathbf{133.8 \pm 1.0\%}$} & $0.90 \pm 4.8\%$ & $100/100$ & \textcolor{purple}{$\mathbf{2610 \pm 0.7\%}$} & $23.1 \pm 1.5\%$ & \textcolor{purple}{$\mathbf{100/100}$} \\
             IL-DFS & $136.4 \pm 1.8\%$ & \textcolor{purple}{$\mathbf{0.74 \pm 5.3\%}$} & $100/100$ & $3103 \pm 2.0\%$ & \textcolor{purple}{$\mathbf{22.5 \pm 3.1\%}$} & \textcolor{purple}{$\mathbf{100/100}$} \\
            \midrule
            PG-tMDP  & $649.4 \pm 0.7\%$ & $2.32 \pm 2.4\%$ & $100/100$ & $44649 \pm 3.7\%$ & $221 \pm 4.1\%$ & $78   /100$ \\
            DQN-tMDP  & $175.8\pm 1.1\%$ & $0.83 \pm 4.5\%$ & $100/100$ & $8632 \pm 4.9\%$ & $71.3 \pm 5.8\%$ & $96/100$ \\ 
            DQN-Retro & $183.0 \pm 1.2\%$ & $1.14 \pm 4.1\%$ & $100/100$ & $6100 \pm 4.2\%$ & $59.4 \pm 4.2\%$ & $98/100$ \\
            DQN-BBMDP & \textcolor{blue}{$\mathbf{152.3 \pm 0.6\%}$} & \textcolor{blue}{$\mathbf{0.77 \pm 5.6\%}$} & $100/100$ & \textcolor{blue}{$\mathbf{5651 \pm 2.2\%}$} & \textcolor{blue}{$\mathbf{46.4 \pm 3.3\%}$} & \textcolor{blue}{$\mathbf{100/100}$} \\
            \bottomrule
            & \multicolumn{6}{c}{Set covering}
        \hspace{1cm}
        \end{tabular}
        \begin{tabular}{ccccccc} 
            & \multicolumn{3}{c}{Train / Test} & \multicolumn{3}{c}{Transfer}\\
            Method & Nodes & Time & Solved & Nodes & Time & Solved \\  
            \toprule
            Random & $1111 \pm 4.3\%$ & $2.16 \pm 6.6\%$ & $100/100$ & $354650 \pm 6.7\%$ & $814 \pm 7.1\%$ & $64/100$ \\
            SB  & $28.2 \pm 0.0\%$ & $6.21 \pm 0.1\%$ & $100/100$ & $389.6 \pm 0.0\%$ & $255 \pm 0.2\%$ & $88/100$ \\
            SCIP  & $20.2 \pm 0.0\%$ & $1.77 \pm 0.1\%$ & $100/100$ & $1376 \pm 0.0\%$ & $14.
            77\pm 0.1\%$ & $100/100$ \\
            \midrule
             IL & \textcolor{purple}{$\mathbf{83.6 \pm 0.8\%}$} & $0.65 \pm 7.3\%$ & $100/100$ & \textcolor{purple}{$\mathbf{1309 \pm 1.6\%}$} & \textcolor{purple}{$\mathbf{9.4 \pm 2.2\%}$} & \textcolor{purple}{$\mathbf{100/100}$} \\
             IL-DFS & $95.5 \pm 0.9\%$ & \textcolor{purple}{$\mathbf{0.56 \pm 7.1\%}$} & $100/100$ & $1802 \pm 2.0\%$ & $10.2 \pm 1.8\%$ & $100/100$ \\
            \midrule
            PG-tMDP  & $168.0 \pm 2.8\%$ & $0.94 \pm 6.0\%$ & $100/100$ & $6001 \pm 2.7\%$ & $30.7 \pm 2.4\%$ & $100/100$ \\
            DQN-tMDP  & $203.3 \pm 4.2\%$ & $1.11 \pm 4.0\%$ & $100/100$ & $20553 \pm 3.8\%$ & $116 \pm 3.9\%$ & $100/100$ \\ 
            DQN-Retro & $103.2 \pm 1.2\%$ & $0.78 \pm 7.5\%$ & $100/100$ & $2908 \pm 1.7\%$ & $18.4 \pm 2.7\%$ & $100/100$ \\
            DQN-BBMDP & \textcolor{blue}{$\mathbf{97.9 \pm 1.2\%}$} & \textcolor{blue}{$\mathbf{0.62 \pm 8.5\%}$} & $100/100$ & \textcolor{blue}{$\mathbf{2273 \pm 1.9\%}$} & \textcolor{blue}{$\mathbf{11.8 \pm 2.0\%}$} & \textcolor{blue}{$\mathbf{100/100}$} \\
            \bottomrule
            & \multicolumn{6}{c}{Combinatorial auction}
        \hspace{5mm}
        \end{tabular}
                \begin{tabular}{ccccccc} 
            & \multicolumn{3}{c}{Train / Test} & \multicolumn{3}{c}{Transfer}\\
            Method & Nodes & Time & Solved & Nodes & Time & Solved \\ 
            \toprule
            Random & $386.8 \pm 5.4\%$ & $2.01 \pm 4.8\%$ & $100/100$ & $215879 \pm 6.7\%$ & $2102 \pm 6.2\%$ & $25/100$ \\
            SB  & $24.9 \pm 0.0\%$ & $45.87 \pm 0.4\%$ & $100/100$ & $169.9 \pm 0.2\%$ & $2172 \pm 0.9\%$ & $15/100$ \\
            SCIP  & $19.5 \pm 0.0\%$ & $2.44 \pm 0.4\%$ & $100/100$ & $3368 \pm 0.0\%$ & $90.0 \pm 0.2\%$ & $100/100$ \\
            \midrule
             IL & \textcolor{purple}{$\mathbf{40.1 \pm 3.45\%}$} & \textcolor{purple}{$\mathbf{0.36 \pm 3.1\%}$} & $100/100$ & \textcolor{purple}{$\mathbf{1882 \pm 4.0\%}$} & \textcolor{purple}{$\mathbf{38.6 \pm 3.2\%}$} & \textcolor{purple}{$\mathbf{100/100}$} \\
             IL-DFS & $69.4 \pm 6.5\%$ & $0.44 \pm 4.8\%$ & $100/100$ & $3501 \pm 2.7\%$ & $51.9 \pm 2.6\%$ & $100/100$ \\
            \midrule
            PG-tMDP  & $153.6 \pm 5.0\%$ & $0.92 \pm 2.6\%$ & $100/100$ & \textcolor{blue}{$\mathbf{3133 \pm 4.6\%}$} & \textcolor{blue}{$\mathbf{39.5 \pm 3.8\%}$} & \textcolor{blue}{$\mathbf{100/100}$} \\
            DQN-tMDP  & $168.0 \pm 5.6\%$ & $1.00 \pm 3.4\%$ & $100/100$ & $45634 \pm 7.4\%$ & $477 \pm 5.1\%$ & $85/100$ \\ 
            DQN-Retro & $223.0 \pm 4.1\%$ & $1.81 \pm 3.6\%$ & $100/100$ & $119478 \pm 6.1\%$ & $1863 \pm 4.8\%$ & $22/100$ \\
            DQN-BBMDP & \textcolor{blue}{$\mathbf{103.2 \pm 9.3\%}$} & \textcolor{blue}{$\mathbf{0.62 \pm 6.8\%}$} & $100/100$ & $7168 \pm 5.3\%$ & $81.3 \pm 4.2\%$ & $95/100$ \\
            \bottomrule
            & \multicolumn{6}{c}{Maximum independent set}
        \hspace{1cm}
        \end{tabular}
        \begin{tabular}{ccccccc} 
            & \multicolumn{3}{c}{Train / Test} & \multicolumn{3}{c}{Transfer}\\
            Method & Nodes & Time & Solved & Nodes & Time & Solved \\ 
            \toprule
            Random & $733.5 \pm 13.0\%$ & $0.55 \pm 6.9\%$ & $100/100$ & $93452\pm 14.3\%$ & $70.6 \pm 9.2\%$ & $99/100$ \\
            SB  & $161.7 \pm 0.0\%$ & $0.69\pm 0.1\%$ & $100/100$ & \textcolor{purple}{$\mathbf{1709 \pm 0.5\%}$} & \textcolor{purple}{$\mathbf{12.5 \pm 0.9\%}$} & \textcolor{purple}{$\mathbf{100/100}$} \\
            SCIP  & $289.5 \pm 0.0\%$ & $0.53 \pm 0.2\%$ & $100/100$ & $30260 \pm 0.0\%$ & $22.14 \pm 0.2\%$ & $100/100$ \\
            \midrule
             IL & $272.0 \pm 12.9\%$ & $0.69 \pm 8.5\%$ & $100/100$ & $9747 \pm 7.5\%$ & $46.5 \pm 6.6\%$ & $100/100$ \\
             IL-DFS & $472.8 \pm 13.0\%$ & $1.07 \pm 9.0\%$ & $100/100$ & $43224 \pm 9.0\%$ & $131 \pm 8.6\%$ & $98/100$ \\
            \midrule
            PG-tMDP  & $436.9 \pm 21.2\%$ & $1.57 \pm 16.9\%$ & $100/100$ & $35614 \pm 14.3\%$ & $123 \pm 15.4\%$ & $98/100$ \\
            DQN-tMDP  & $266.4 \pm 7.2\%$ & $0.73 \pm 4.6\%$ & $100/100$ & \textcolor{blue}{$\mathbf{22631 \pm 8.6\%}$} & \textcolor{blue}{$\mathbf{65.1 \pm 5.5\%}$} & $99/100$ \\ 
            DQN-Retro & $250.3 \pm 9.5\%$ & $0.67 \pm 5.0\%$ & $100/100$ & $27077 \pm 8.8\%$ & $79.5 \pm 6.2\%$ & $98/100$ \\
            DQN-BBMDP & \textcolor{purple}{$\mathbf{236.6 \pm 6.4\%}$} & \textcolor{purple}{$\mathbf{0.66 \pm 2.7\%}$} & $100/100$ & $37098 \pm 7.0\%$ & $109 \pm 4.9\%$ & \textcolor{blue}{$\mathbf{100/100}$} \\
            \bottomrule
            & \multicolumn{6}{c}{Multiple knapsack}
        \hspace{5mm}
        \end{tabular}
        \label{tab:complete_results}
    \end{table}

\end{document}